\newcommand{\gx}{g_{i}^x}
\newcommand{\gy}{g_{i}^y}
\newcommand{\highlight}[1]{\colorbox{blue!10}{#1}}
\title{Understanding and Improving Neural Active Learning on Heteroskedastic Distributions
\thanks{Correspondence to \href{mailto:savyakhosla08@gmail.com}{savyakhosla08@gmail.com}} 
}
\author{
  Savya Khosla \\
  University of Illinois Urbana-Champaign \\
  \And
  Chew Kin Whye \\
  National University of Singapore \\
  \And
  Jordan T. Ash \\
  Microsoft Research NYC \\
  \And
  Cyril Zhang \\
  Microsoft Research NYC \\
  \And
  Kenji Kawaguchi \\
  National University of Singapore \\
  \And
  Alex Lamb \\
  Microsoft Research NYC \\
}
\begin{document}
\maketitle

\newtheorem{thm}{Theorem}
\newtheorem{lem}[thm]{Lemma}
\newtheorem{cor}[thm]{Corollary}
\newtheorem{rem}[thm]{Remark}
\newtheorem{remark}[thm]{Remark}
\newtheorem{conj}[thm]{Conjecture}
\newtheorem{proposition}[thm]{Proposition}

\begin{abstract}
Models that can actively seek out the best quality training data hold the promise of more accurate, adaptable, and efficient machine learning. Active learning techniques often tend to prefer examples that are the most difficult to classify. While this works well on homogeneous datasets, we find that it can lead to catastrophic failures when performed on multiple distributions with different degrees of label noise or heteroskedasticity. These active learning algorithms strongly prefer to draw from the distribution with more noise, even if their examples have no informative structure (such as solid color images with random labels). To this end, we demonstrate the catastrophic failure of these active learning algorithms on heteroskedastic distributions and propose a fine-tuning-based approach to mitigate these failures. Further, we propose a new algorithm that incorporates a model difference scoring function for each data point to filter out the noisy examples and sample clean examples that maximize accuracy, outperforming the existing active learning techniques on the heteroskedastic datasets. We hope these observations and techniques are immediately helpful to practitioners and can help to challenge common assumptions in the design of active learning algorithms. Our code is available at \href{https://github.com/savya08/Active-Learning-on-Heteroskedastic-Distributions}{this URL}.
\end{abstract}
\section{Introduction}
In an active learning setup, a model has access to a pool of labeled and unlabeled data. After training on the available labeled data, a selection rule is applied to identify a batch of $k$ unlabeled examples to be labeled and integrated into the training set before repeating the process. Under this paradigm, data is considered to be abundant, but label acquisition is costly. An active learning algorithm aims to identify unlabeled examples that, once labeled and used to fit model parameters, will elicit the most performant hypothesis possible given a fixed labeling budget. To fulfill this objective, a selection criteria generally follows two heuristics: (1) select diverse examples and (2) select examples where the model has a high degree of uncertainty. 

The presence of noise is an unavoidable problem that corrupts real-world datasets \cite{Wang95quality} and is detrimental to the performance of classifiers directly trained on them. Active learning can seek to combat this problem through a robust data-selection pipeline that effectively filters out the noisy data and select the most informant examples for machine learning, allowing us to efficiently leverage the abundance of data we have at our disposal. 

In our study, we explore the performance of the active learning algorithms on \emph{heteroskedastic distributions}, where the training data consists of a mixture of distinct distributions with different degrees of noise. One use case of active learning on heteroskedastic distributions is in reinforcement learning, where the agent gathers its own training data through its actions and obtains the feedback from the environment. Often, the environment has heteroskedastic noise. For example, certain actions like "opening a box with a question mark symbol" have random outputs, whereas other actions like "moving left/right" have deterministic outputs. Another possible use case of our study could be for training LLMs using techniques like self-instruct \cite{self-instruct}. Recent works \cite{self-instruct, alpaca, llama-adapter} have shown that curating a dataset using large language models could help generate more diverse training examples. However, this method of curating data is inherently noisy, and therefore the data pipeline uses a critical filtering step to remove redundant and non-informative examples. We believe that the findings in this paper could also help reduce the dependency on clean data allowing practitioners to train models on larger sets.

We find that preferring examples with high uncertainty often works well on homogeneous datasets but can lead to catastrophic failure when training on heteroskedastic distributions. Uncertainty-based active learning algorithms typically rely on notions of model improvement that are unable to disambiguate aleatoric uncertainty from epistemic uncertainty, thereby over-selecting examples for which the model is unconfident but which are unlikely to improve the current hypothesis. We produce a generalization bound that explains why this phenomenon occurs, which seems to superficially contradict previous theory \cite{kawaguchi2020ordered} that showed training only on examples with high loss could generalize as well as training on randomly selected examples.

\begin{figure*}
    \centering
    \begin{subfigure}[b]{0.5\textwidth}
        \centering
        \includegraphics[width=0.5\linewidth]{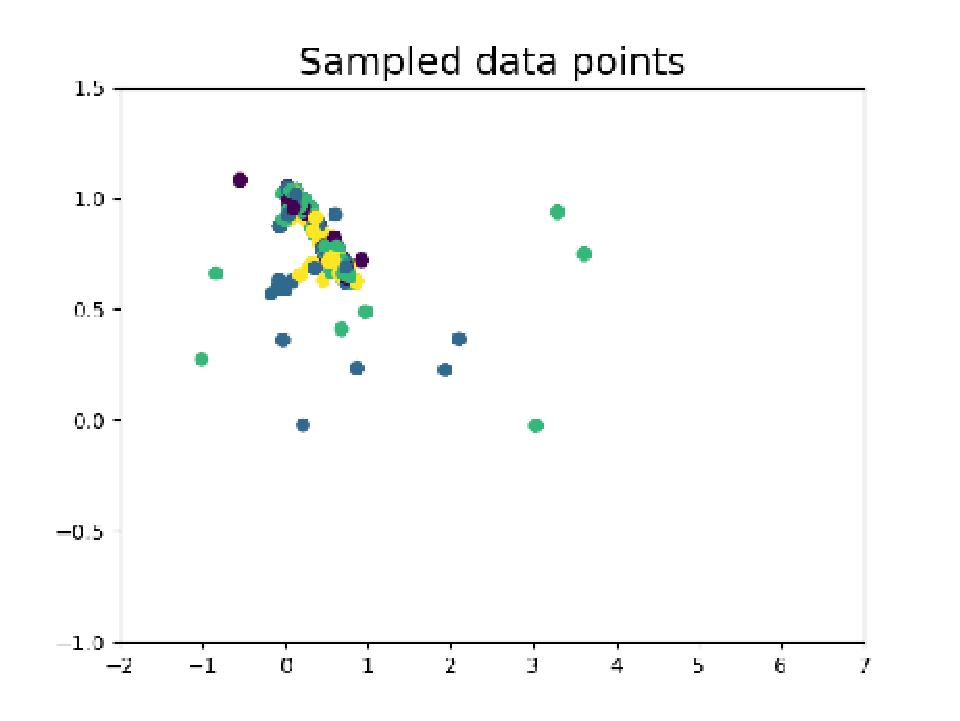}%
        \includegraphics[width=0.5\linewidth]{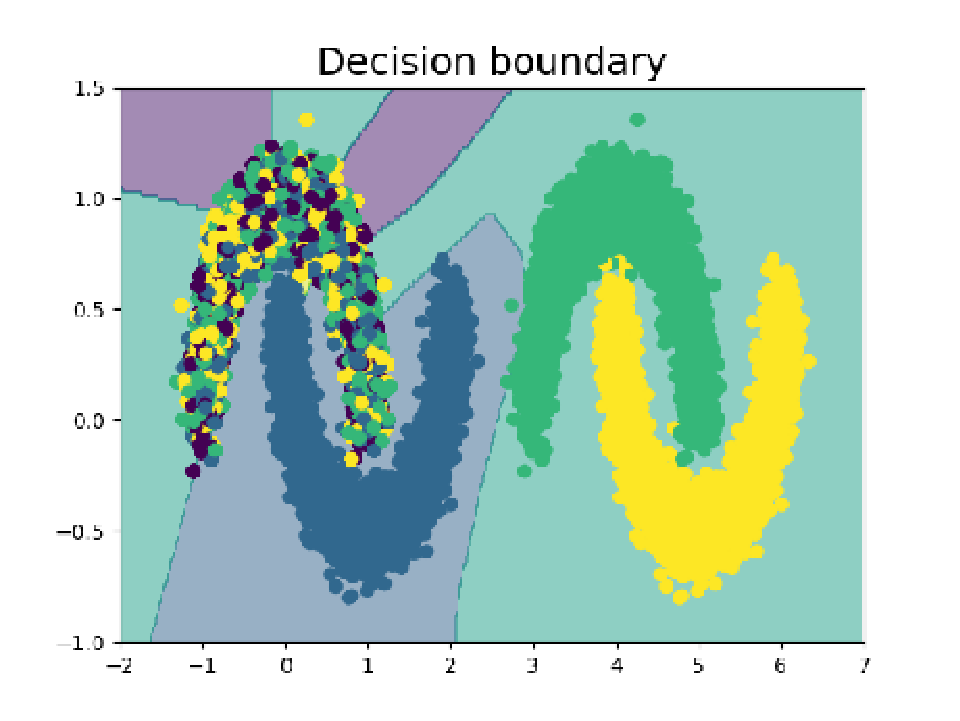}
        \caption{Least-Confidence Sampling: Test accuracy = 53.67\%}
    \end{subfigure}%
    \begin{subfigure}[b]{0.475\textwidth}
        \centering
        \includegraphics[width=0.5\linewidth]{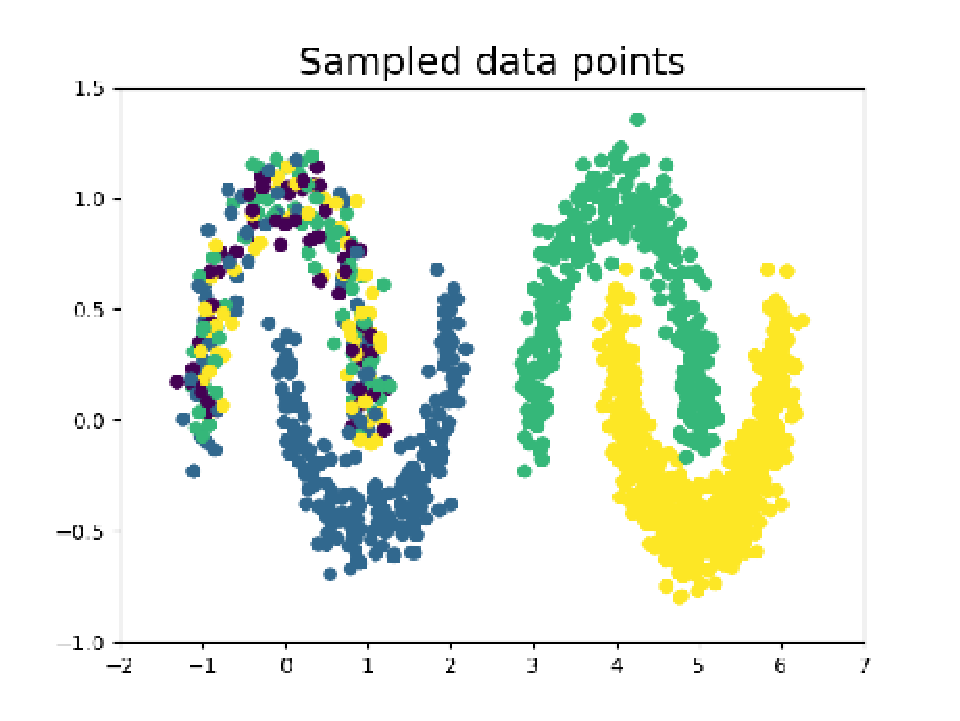}%
        \includegraphics[width=0.5\linewidth]{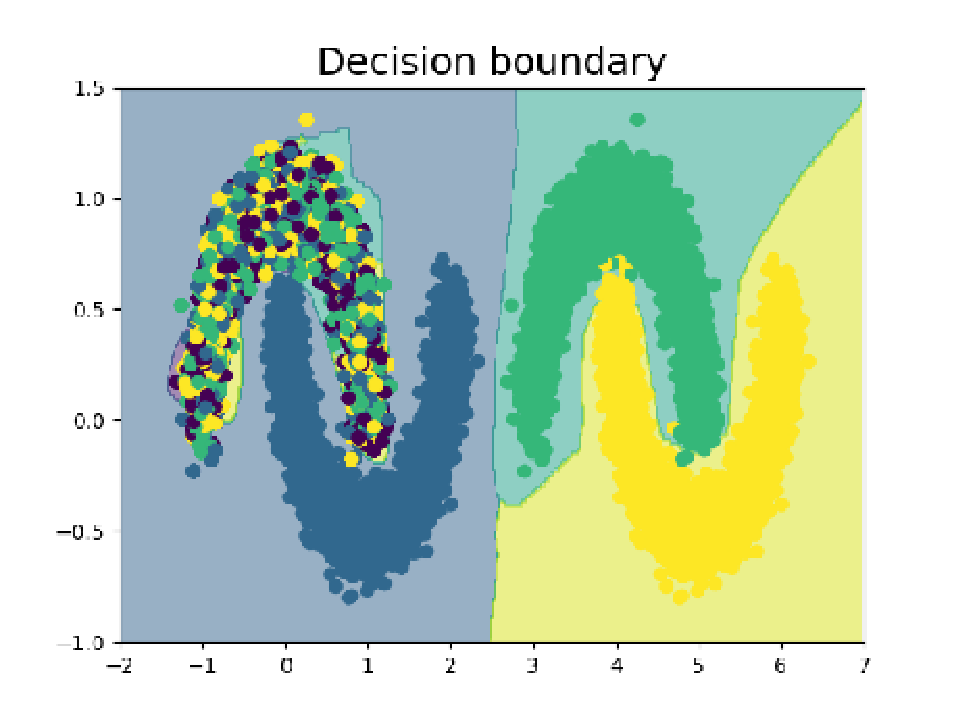}
        \caption{\textsc{LHD} Sampling: Test accuracy = 84.68\%}
    \end{subfigure}
    \caption{We construct \textit{Four-Moons Dataset}, a toy dataset with heteroskedastic noise. In this dataset, we have four classes - blue, green, yellow, and purple. The data points belonging to the purple class (top-left moon) are assigned uniformly random labels, while the points in other moons has no noise. The least-confidence sampling (uncertainty-based algorithm) selects examples almost exclusively from the noisy class (top-left moon), resulting in a poor decision boundary. The proposed LHD algorithm, on the other hand, promotes selection of clean data points and almost perfectly solves the classification problem.}
    \label{fig:intro-teaser}
\end{figure*}

Further, we show that this inefficiency in the data-selection process can be mitigated in three ways.

1. \textbf{Favoring diversity over uncertainty.} As we demonstrate empirically in this work, active learning algorithms that promote the selection of a diverse set of examples can efficiently filter out the data points with heteroskedastic noise since the noisy examples come from the same distribution and therefore have similar feature representations.

2. \textbf{Leveraging high confidence examples in the unlabeled pool of data.} We show that the examples in the unlabeled pool for which the model is highly confident can promote better feature learning, thereby improving the performance of active learning algorithms even in the presence of heteroskedastic noise in the dataset.

3. \textbf{Encouraging the selection of examples for which model's representations change over training iteration.} We show that the model's representation for examples with heteroskedastic noise converges quickly to a suboptimal solution. This can be used as a helpful signal to filter out these noisy examples.

The change in model's representation is measured using the difference between a conventionally-trained model and an exponential moving average (EMA) of its iterates. For the noisy examples in the dataset, both the conventionally-trained model and the EMA model converges quickly to the suboptimal solution. Consequently, the EMA difference is nearly zero for these examples, and thus, it can be used as a helpful signal to filter out the noise from the dataset. Further, as we show later in the paper, the EMA difference is maximum for examples that are difficult to classify (but not noisy), thereby promoting the selection of challenging yet clean data.

The toy example with heteroskedastic noise in Figure~\ref{fig:intro-teaser} shows how the least-confidence sampling (uncertainty-based algorithm) selects examples almost exclusively from the noisy class (top-left moon), resulting in an extremely poor decision boundary. The Coreset algorithm (diversity-based algorithm) and the proposed LHD algorithm promotes selection of clean data points and almost perfectly solves the classification problem.



The main contributions of this work are:

\begin{enumerate}
    \item We study the performance of active learning algorithms on heteroskedastic distributions and show that algorithms that exclusively prefer uncertainty can catastrophically fail in the presence of heteroskedasticity.
    \item We produce a generalization bound that explains why training only on low confidence examples can lead to poor performance in the presence of heteroskedastic noise.
    \item We explore a fine-tuning-based approach that helps improve the performance of all algorithms in the presence of heteroskedasticity.
    \item We propose an algorithm, hereafter referred to as LHD, that performs comparably to the existing state-of-the-art algorithms in the general setup and, when coupled with fine-tuning, outperforms all algorithms by a significant margin.
\end{enumerate}

\section{Related Work}
\textbf{Neural active learning.} Active learning is an extremely well-researched area, with the richest theory developed for the convex setting \cite{D11,chaudhuri2015convergence,chaudhuri2017active}. More recently, however, there have been several attempts to tractably generalize active learning to the deep regime. Such approaches can be thought of as identifying batches of samples that cater more to either the model's predictive uncertainty or to the diversity of the selection.

In the former approach, a batch of points is selected in order of the model's uncertainty about their label. Many of these methods query samples that are nearest the decision boundary, an approach that's theoretically well understood in the linear regime when the batch size is $1$ \cite{tur2005combining}. Some deep learning-specific approaches have also been developed, including using the variance of dropout samples to quantify uncertainty \cite{gal2017deep}, and adversarial examples have been used to approximate the distance between an unlabeled sample and the decision boundary. In the deep setting, however, where models are typically retrained from scratch after every round of selection, larger batch size is usually necessary for efficiency purposes.

For large acquisition batch sizes, algorithms that cater to diversity are usually more effective. In deep learning, several methods take the representation obtained at the penultimate layer of the network and aim to identify a batch of samples that might summarize this space well \cite{sener2018active,geifman2017deep,gissin2019discriminative}. Other methods promote diversity by minimizing an upper bound on some notion of the model's loss on unseen data \cite{wang2015querying,pmlr-v28-chen13b,wei2015submodularity,batchbald}. This approach has also been taken to a trade-off between diversity and uncertainty in deep active learning \cite{ash2021gone,ash2019deep}.

\textbf{Data poisoning, distributional robustness, and label noise.} A related body of work seeks to obtain models and training procedures that are robust against \emph{worst-case} perturbations to the data distribution. For recent treatments of this topic and further references, see \cite{steinhardt2017certified,sagawa2019distributionally}. A few recent works have considered data poisoning in the active learning setting \cite{lin2021active,vicarte2021double}, with defenses focusing on modifying the setting rather than the algorithm. Further, some existing works in active learning regime \cite{similar2021al,contrastive2021al} consider the presence of label noise, out-of-distribution examples, and redundancy in the dataset. Our work, however, considers the setting wherein the system suffers from low-quality labels (e.g., in medical diagnosis, where the labelers are not always adept at assigning the correct label to the example queried by the algorithm and might end up incorrectly assigning out-of-distribution examples to one of the classes in the label space).

\textbf{Heteroskedasticity in machine learning.}  The issues of class imbalance and heteroskedasticity are of interest in the supervised learning setting \cite{shu2019meta,cao2020heteroskedastic}, in which various methods have been proposed to make training more robust to these distributions. Our work seeks to initiate the study of the orthogonal (but analogous) issue in the sample \emph{selection} regime. Similar to our work, \cite{ANTOS20102712} considers a theoretical regression problem that explores active learning in heteroskedastic noise. However, contrary to our setup, it \emph{leverages} heteroskedasticity in the pool of labeled data to sample more observations from the parts of the input space with large variance.

\textbf{Semi-supervised active learning.} Recent advances in semi-supervised learning (SSL) have demonstrated the potential of using unlabeled data for active learning. For instance, \cite{Zhu2003CombiningAL} combines SSL and AL using a Gaussian random field model.  \cite{consistencysslal} proposes to human label the unlabelled example for which the different augmented views result in inconsistent SSL model's predictions because such behavior indicates that the model cannot successfully distill helpful information from that unlabelled example. Similarly, \cite{bilevelopt} proposes to use a model trained using SSL to select a batch of unlabeled examples that best summarizes a pool of data pseudo-labeled by the model itself. \cite{ssl2019asr,Rhee2017ActiveAS} leverage active learning and semi-supervised learning in succession to show incremental improvements in speech recognition and object detection, respectively. \cite{Sinha2019vaal} learns the sampling criteria by setting up a mini-max game between a variational auto-encoder that generates latent representations for labeled and unlabeled data and an adversarial network that tries to discriminate between these representations. In this work, we experiment with a very simple SSL setup to see its effectiveness when performing active learning in the presence of heteroskedastic noise.

\begin{figure*}[!htbp]
    \centering
    \includegraphics[width=0.9\linewidth]{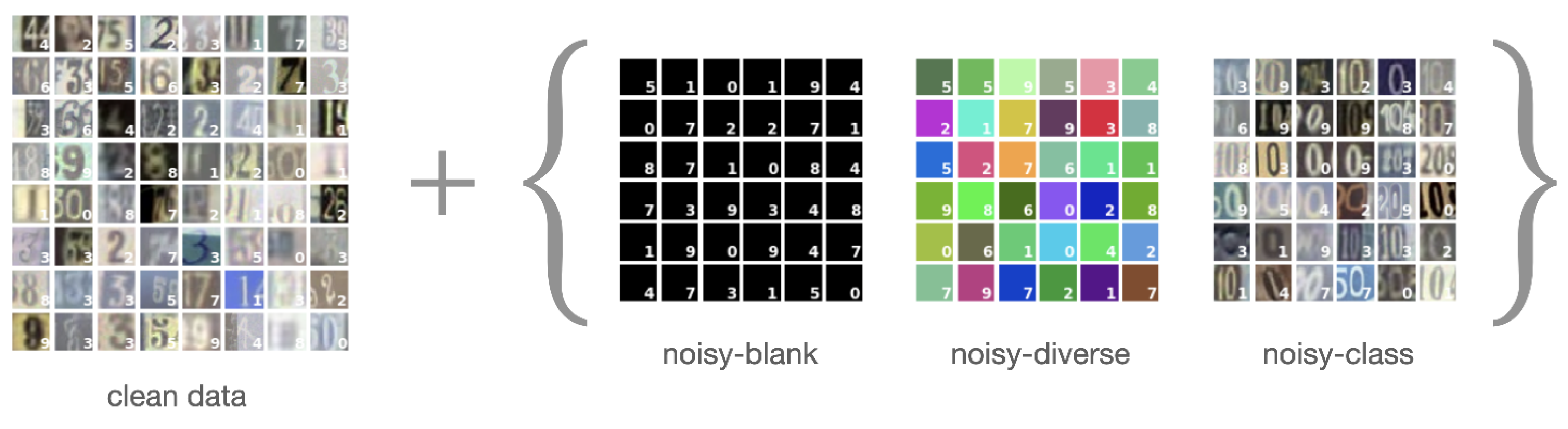}
    \caption{The heteroskedastic distributions proposed in this paper. The SVHN dataset is corrupted with randomly labeled examples, with (1) black images, (2) diverse coloured images, and (3) images from one class.}
    \label{fig:svhn-noise}
\end{figure*}

\section{Heteroskedastic Benchmarks for Neural Active Learning}
\label{heteroskedastic-data}
We introduce three benchmarks for active learning on heteroskedastic distributions. In all cases, we introduce an additional set of $N$ examples with purely random labels to the original clean data. $K$ is the number of unique noisy examples, since some of the examples are repeated.

The model is not given information on which samples are noisy/clean, but it is reasonably predictable from the example's features since the noisy datapoints are from the same distribution. This distinguishes our benchmarks from IID label noise, which is not predictable based on the example's features. These constructions are described below and summarized in Figure~\ref{fig:svhn-noise}.

\textbf{Noisy-Blank}: We introduce $N$ examples that are all solid black ($K = 1$) and have a random label $y \sim U(1,n_y)$, where $n_y$ refers to the number of classes. 

\textbf{Noisy-Diverse}: We increase the difficulty by introducing $K=100$ different types of examples, where each type is a random solid color and has a label randomly drawn from three label choices that are unique to that color. $N$ such noisy examples are introduced to the dataset. This benchmark is designed to make the heteroskedastic distribution more diverse while still keeping the noisy examples simple.

\textbf{Noisy-Class}: In our most challenging setting, we take $K$ examples from a particular class (say, $y=1$) and assign these examples uniformly random labels $y \sim U(1,n_y)$. We then randomly repeat these examples to give $N$ noisy examples. In this case, the randomly labeled examples are challenging but still possible to identify.  

We designed these benchmarks to easily evaluate the performance of existing algorithms. Despite being highly simplistic, these benchmarks do delineate a shortcoming of the existing active learning algorithms, and we believe that practitioners should make their active learning pipelines robust to such noisy adversaries. In this regard, we can draw an analogy with the domain of adversarial learning – while the adversary will not have access to the model weights and gradients in most realistic setups, practitioners want their pipelines to be robust to white-box adversarial attacks like Projected Gradient Descent. 

Future work can be done to generate more realistic datasets with heteroskedastic noise.
\section{Method}
In this section, we describe (1) the baseline active learning algorithms with which we experiment, (2) \textsc{LHD}, an active learning algorithm that leverages EMA difference to sample examples for which the model's representation changes over training iterations, (3) the fine-tuning technique used to improve the performance of the baseline algorithms. and (4) the combination of LHD with fine-tuning.

\subsection{Baselines}
We review some prominent neural active learning algorithms, which act as baselines in our study.

\textbf{Random sampling ($\mathsf{RAND}$).}
Unconditional random sampling from the unlabeled pool of data.

\textbf{Least confidence sampling ($\mathsf{CONF}$).} 
Confidence sampling selects the $k$ unlabeled points for which the most likely label has the smallest probability mass \cite{wang2014new}: 
\begin{center}
    $x^*_{\mathsf{CONF}} = \arg\min_{x} P_{\theta}(\hat{y} | x)$
\end{center}
Here, $P_{\theta}(\hat{y} | x)$ is the probability of the predicted (most likely) class $\hat{y}$ given the input $x$ and model parameters $\theta$, and $x^*_{\mathsf{CONF}}$ is the selected batch of data points.

\textbf{Margin sampling ($\mathsf{MARG}$).} 
Margin sampling selects the $k$ points for which the difference in probability mass in the two most likely labels is smallest \cite{roth2006margin}: 
\begin{center}
    $x^*_{\mathsf{MARG}} = \arg\min_{x} P_{\theta}(\hat{y}_1 | x) - P_{\theta}(\hat{y}_2 | x)$
\end{center}
where $\hat{y}_1$ and $\hat{y}_2$ are the first and second most probable classes, respectively.

\textbf{Bayesian Active Learning by Disagreements ($\mathsf{BALD}$).}
Here, the objective is to select $k$ points that maximize the decrease in expected posterior entropy \cite{bald2011}: 
\begin{center}
    $x^*_{\mathsf{BALD}} = \arg\max_{x} H[\theta|D] - \mathbb{E}_{y \sim p(y|x,D)}[H[\theta|y,x,D]]$
\end{center}
where $H[\cdot]$ represents entropy, $\theta$ represents model parameters, and $D$ represents the datatset.

\textbf{Coreset sampling ($\mathsf{CORESET}$).}
The Coreset algorithm is a diversity-based approach that aims to select a batch of representative points, as measured in penultimate layer space of the current state of the model \cite{sener2018active}.  We refer to the function for computing this penultimate layer as $h(x)$. It proceeds in these steps on each acquisition round:

(1) Given a set of existing selected unlabeled examples and labeled examples $x^*_{\mathsf{CORESET}}$ and a set of indices of these selected examples $s$.

(2) Select an example with the greatest distance to its nearest neighbor in the hidden space 
\begin{center}
    $u=\arg\max_{i \in [n] \setminus \mathbf{s}} \min_{j \in \mathbf{s}} \Delta(h(\mathbf{x}_i), h(\mathbf{x}_j))$
\end{center}

(3) Set $s = s \cup \{u\}$ and $x^*_{\mathsf{CORESET}} = x^*_{\mathsf{CORESET}} \cup \{X_u\}$.

(4) Repeat this in an active learning round until we reach the acquisition batch size.

\textbf{Batch Active learning by Diverse Gradient Embeddings ($\mathsf{BADGE}$).}
BADGE is a hybridized approach, meant to strike a balance between uncertainty and diversity. The algorithm represents data in a hallucinated gradient space before performing diverse selection using the k-means$++$ seeding algorithm \cite{ash2019deep}. It proceeds with these steps on each acquisition round:

(1) Compute hypothetical labels $\hat{y}(x) = h_{\theta_t}(x)$ for all unlabeled examples.

(2) Compute gradient embedding for each unlabeled example 
\begin{center}
    $g_x = \frac{\partial}{\partial \theta_\text{out}} \ell_{}(f(x;\theta),\hat{y}(x)) \vert_{\theta = \theta_t}$
\end{center}
where $\theta_\text{out}$ refers to the parameters of the output layer.

(3) Use $k$-means$++$ over the gradient embedding vectors $g_x$ over all unlabeled examples to select a batch of examples $x^*_{\mathsf{BADGE}}$.

\subsection{\textsc{LHD}: Increasing Sampling Where Representations Change Across Training Iterations}
\label{lhd}

For \emph{noisy} examples, conflicting gradients result in the model converging quickly to a suboptimal solution and undergoing little change throughout the training. For the \emph{clean} examples, the model converges slowly to an optimal solution, undergoing changes throughout the training (Figures \ref{fig:noisy-vs-clean-loss} and \ref{fig:conflicting-gradients} in the Appendix substantiate this claim). Therefore, by encouraging the selection process in active learning to select the examples for which the model converges slowly, we can maximize the sampling of \emph{clean} examples, improving the performance in the heteroskedastic setting.

To measure the convergence rate, in addition to the main model $F_{\theta}$, we introduce an exponential moving average (EMA) of the model $F_{\beta}$ in the training pipeline. The EMA model has the same architecture as the main model but uses a different set of parameters $\beta$, which are exponentially moving averages of $\theta$. That is, at epoch $t$, $\beta_{t+1} \leftarrow \alpha \cdot \beta_t + (1 - \alpha) \cdot \theta_t$, for some choice of decay parameter $\alpha$.  

The convergence rate for a training example is captured as the state difference between the main and the EMA model, which is measured in two ways: 

(1) \textbf{Loss Difference $\Delta l$}: The absolute difference between the loss values of an example from the EMA model and the main model: 
\begin{center}
    $\Delta l = \mid l_\text{ema} - l_\text{main} \mid $
\end{center}

For the unlabeled examples, we assume the prediction of the EMA model as the ground truth for loss computation. 

(2).  \textbf{Hidden State Difference $\Delta \mathbf{h}$}: The difference between the hidden feature representation from the penultimate layer of the EMA model and the main model: 
\begin{center}
    $\Delta \mathbf{h} = \mathbf{h_\text{ema}} - \mathbf{h_\text{main}}$.  
\end{center}


$\Delta l$ will be low for \emph{noisy} examples because both $l_\text{main}$ and $l_\text{ema}$ are high throughout the training. Similarly, $\Delta l$ will be low for \emph{simple-clean} examples because both $l_\text{main}$ and $l_\text{ema}$ are low for most part of the training. For \emph{difficult-clean} examples, however, the main model converges slowly, which leads to an even slower convergence of the EMA model. As a result, at some point in the training, we get a low $l_\text{main}$ but a high $l_\text{ema}$ for \emph{difficult-clean} examples, resulting in a high $\Delta l$. Using a similar line of reasoning, we can infer that $\Delta \mathbf{h}$ will have a smaller magnitude for the \emph{noisy} examples and \emph{simple-clean} examples, and a larger magnitude for \emph{difficult-clean} examples. Further, examples that are similar to one another will have similar $\Delta \mathbf{h}$, and a diversity-based sampling technique that operates on $\Delta \mathbf{h}$ will promote sampling of a diverse batch of examples.

We use $\Delta l$ and $\Delta \mathbf{h}$ to obtain the final \textbf{State Difference $\mathbf{lh}$} as:

\begin{center}
    $\mathbf{lh} = \Delta l \cdot \Delta \mathbf{h}$.
\end{center}

A training example with a small $||\mathbf{lh}||_2$ has a small state difference between the main and EMA model, meaning the example had converged very fast early on in the training, indicating that it is probably a \emph{noisy} example. (This can be seen from Figure~\ref{fig:wo-ft} in the Appendix).

Algorithm \ref{alg:lhd} describes \textsc{LHD} in detail. The state difference, $\mathbf{lh}$, is computed for all unlabeled examples. Then, we use the $k$-means$++$ seeding algorithm \cite{Arthur07kmeans} over all the $\mathbf{lh}$ embeddings, which selects a batch of diverse examples that have a high magnitude of $\mathbf{lh}$.

\begin{algorithm}
    \caption{LHD: \textbf{L}oss and \textbf{H}idden state \textbf{D}ifference sampling}
    \label{alg:lhd}
    \begin{algorithmic}[1]
        \REQUIRE Main model $F_\theta$, EMA model $F_\beta$, unlabeled pool of examples $U$, initial number of examples $M$, number of active learning iterations $T$, decay parameter $\alpha$, cross-entropy loss function $\text{CE}(\cdot)$, one-hot function $\text{OH}(\cdot)$.\
        \STATE Labeled dataset $S$ $\leftarrow$ $M$ examples drawn uniformly at random from $U$ together with their labels $y$.\
        \STATE Train an initial main model $F_{\theta_1}$ on $S$ while updating the initial EMA model $F_{\beta_1}$ using $\beta_{1} \leftarrow \alpha \cdot \beta_1 + (1 - \alpha) \cdot \theta_1$ in each training iteration
        \FOR{$t=1,2,...,T$} 
            \STATE Optionally fine-tune $F_{\theta_t}$ and $F_{\beta_t}$ using Algorithm \ref{alg:finetune}
            \FOR{all examples $x \in U \setminus S$}
                \STATE $y_{\text{pseudo}_x} = \textsc{OH}({F_{\beta_t}(x)})$
                \STATE $l_{\text{ema}_x} = \text{CE}(F_{\beta_t}(x), y_{\text{pseudo}_x})$ and $\mathbf{h_{\text{ema}}}_x = F^{\Hat{}}_{\beta_t}(x)$, where $F^{\Hat{}}_{\beta_t}(\cdot)$ represents a function to extract penultimate layer
                \STATE $l_{\text{main}_x} = \text{CE}(F_{\theta_t}(x), y_{\text{pseudo}_x})$ and $\mathbf{h_{\text{main}}}_x = F^{\Hat{}}_{\theta_t}(x)$, where $F^{\Hat{}}_{\theta}(\cdot)$ represents a function to extract penultimate layer
                \STATE $\Delta l_x = \mid l_{\text{ema}_x} - l_{\text{main}_x} \mid$ and $\Delta \mathbf{h}_x = \mathbf{h_{\text{ema}}}_x - \mathbf{h_{\text{main}}}_x$
                \STATE $\mathbf{lh}_x = \Delta l_x \cdot \Delta \mathbf{h}_x$
            \ENDFOR
            \STATE $S_t$ $\leftarrow$ a subset of $U \setminus S$ using the $k$-means++ seeding algorithm on $\{\mathbf{lh}_x : x \in U \setminus S\}$ and query their labels
            \STATE $S \leftarrow S \cup S_t$
            \STATE Train $F_{\theta_{t+1}}$ on $S$ by minimizing $E_S[\text{CE}(F_{\theta_t}(x), y)]$ and in each training iteration, update $F_{\beta_{t+1}}$ using $\beta_{t+1} \leftarrow \alpha \cdot \beta_t + (1 - \alpha) \cdot \theta_t$
        \ENDFOR
    \end{algorithmic}
\end{algorithm}

\begin{algorithm}
    \caption{Fine-tuning algorithm for improved example selection}
    \label{alg:finetune}
    \begin{algorithmic}[1]
        \REQUIRE Main model $F_\theta$, EMA model $F_\beta$, unlabeled pool of examples $U$, labeled pool of examples $S$, number of fine-tuning iterations $T$, decay parameter $\alpha$, confidence threshold $\gamma$, data augmentation function $\text{A}(\cdot)$, cross-entropy loss function $\text{CE}(\cdot)$, one-hot function $\text{OH}(\cdot)$.\
        \STATE Initialize $V \leftarrow \{\}$ a set of example selected for fine-tuning
        \STATE Initialize $y \leftarrow \{\}$ pseudo-labels for the fine-tuning examples
        \FOR{all examples $x \in U \setminus S$}
            \STATE Compute confidence $c_x = \max({F_{\theta_t}(x)})$
            \IF{$c_x > \gamma$}
                \STATE $y \leftarrow y \cup \{\textsc{OH}({F_{\beta_t}(x)})\}$
                \STATE $V \leftarrow V \cup \{\text{A}(x)\}$ 
            \ENDIF
        \ENDFOR
        \FOR{$t=1,2,...,T$}
            \STATE Train model $F_{\theta_{t+1}}$ on $V$ by minimizing $E_V[\text{CE}(F_{\theta_t}(x), y)]$ and in each training iteration, update $F_{\beta_{t+1}}$ using $\beta_{t+1} \leftarrow \alpha \cdot \beta_t + (1 - \alpha) \cdot \theta_t$
        \ENDFOR
    \end{algorithmic}
\end{algorithm}

\subsection{Fine-tuning: Using Unlabeled Data for Tackling Heteroskedasticity}
Traditionally, active learning algorithms train models on a small set of labeled data and use a large pool of unlabeled data only for sampling informative examples. However, we conjecture that the unlabeled pool can be efficiently leveraged to improve the performance of active learning algorithms even in the presence of heteroskedasticity. To this end, we experiment with an extremely simple semi-supervised learning technique (similar to \cite{sohn2020fixmatch}) to aid active learning.

After training the model on the labeled data points, we sample a batch of examples from the unlabeled pool for which the model is highly confident. Using the predicted labels for these examples as the ground truth, we fine-tune the model on strongly augmented versions of these confident examples. Since the model is inherently less confident in the noisy examples, most of the examples used for fine-tuning are clean. This way, we leverage the information in already well-classified clean examples, and the model learns more discriminative feature representations. Algorithm \ref{alg:finetune} outlines the fine-tuning procedure.

Since the fine-tuning technique allows us to exploit the information in the unlabeled data pool, all active learning methods benefits from the use of fine-tuning. Additionally, since the fine-tuning technique improves the quality of the representations, active learning algorithms that rely on these representations for the data selection obtains a more substantial benefit, especially in the earlier rounds where labeled data is scarce.


\subsection{LHD with Fine-tuning}
Even though the LHD method is able to filter out the noisy examples, it can still struggle to differentiate between the \emph{simple-clean} examples - examples from the original/clean data that are very easy to classify, and the \emph{difficult-clean} examples - examples from the original/clean data that are challenging for the model to classify.

When we add the fine-tuning method on top of LHD, the average $||\mathbf{lh}||_2$ for the \emph{difficult-clean} examples becomes much higher than the $||\mathbf{lh}||_2$ for \emph{simple-clean} examples. This can be seen from Figure~\ref{fig:w-ft} in the Appendix.

Essentially, the fine-tuning method trains on the examples that the model is confident in, which are mostly the \emph{simple-clean} examples. Therefore, the model will converge quickly to an optimal solution for the \emph{simple-clean} examples, undergoing little change later in training. On the other hand, since the model is not fine-tuned on the \emph{difficult-clean} examples, it takes a longer time to learn the correct solution, converging slowly to an optimal solution and undergoing changes throughout the training.

So, by encouraging the selection of examples for which the model converges slowly (the idea behind LHD) and fine-tuning the model on confident examples (the idea behind fine-tuning), we can maximize the sampling of \emph{difficult-clean} examples, thereby improving the performance of our active learning algorithm in heteroskedastic settings.

Empirically analyzing $\Delta l$ in this setting further substantiates the abovementioned claim. For \emph{noisy} examples, both $l_\text{main}$ and $l_\text{ema}$ are high throughout the training, resulting in a small $\Delta l$. For \emph{simple-clean} examples, both $l_\text{main}$ and $l_\text{ema}$ are low throughout the training, resulting in a small $\Delta l$. For \emph{difficult-clean} examples, since the loss decreases slowly, $l_\text{main}$ will be low while $l_\text{ema}$ is high, resulting in a large $\Delta l$. On similar lines, $||\Delta \mathbf{h}||_2$ has a larger value for \emph{difficult-clean} examples and a smaller value for \emph{simple-clean} and \emph{noisy} examples.

\begin{table}
\centering
\caption{Classification accuracy on CIFAR-10 with a Resnet model after 10 rounds of active learning. (\emph{+FT} represents Fine-tuning)}
\label{tab:cifar-all}
\resizebox{0.6\linewidth}{!}{
\begin{tabular}{lcccc}

\toprule
\textbf{Method} & \textbf{\shortstack[l]{Clean}} & \textbf{\shortstack[l]{Noisy-Blank}} & \textbf{\shortstack[l]{Noisy-Diverse}} & \textbf{\shortstack[l]{Noisy-Class}} \\
\midrule
$\text{RAND}$ & $41.09 \pm 0.43$ & $38.27 \pm 0.27$ & $32.70 \pm 0.18$ & $32.11 \pm 0.42$ \\ 
$\text{CONF}$ & $38.27 \pm 1.48$ & $32.88 \pm 0.25$ & $39.53 \pm 0.08$ & $27.04 \pm 0.26$ \\ 
$\text{MARG}$ & $37.86 \pm 4.33$ & $39.16 \pm 0.83$ & $26.34 \pm 3.25$ & $30.40 \pm 2.46$ \\ 
$\text{BALD}$ & $43.31 \pm 1.34$ & $46.90 \pm 0.39$ & $35.64 \pm 0.32$ & $31.65 \pm 0.30$ \\ 
$\text{CORESET}$ & $41.81 \pm 1.17$ & $47.19 \pm 2.03$ & \highlight{$47.33 \pm 3.31$} & $40.31 \pm 1.51$ \\ 
$\text{BADGE}$ & $41.52 \pm 1.04$ & \highlight{$47.84 \pm 0.38$} & $42.40 \pm 1.66$ & $39.33 \pm 1.74$ \\ 
$\text{LHD}$ & \highlight{$43.70 \pm 1.10$} & $46.28 \pm 1.69$ & $41.25 \pm 1.39$ & \highlight{$40.43 \pm 1.92$} \\ 

\midrule
\textbf{Method} & \textbf{\shortstack[l]{Clean}} & \textbf{\shortstack[l]{Noisy-Blank}} & \textbf{\shortstack[l]{Noisy-Diverse}} & \textbf{\shortstack[l]{Noisy-Class}} \\
\midrule
$\text{RAND}+FT$ & $59.64 \pm 0.63$ & $45.45 \pm 0.72$ & $40.31 \pm 1.03$ & $43.12 \pm 0.82$ \\ 
$\text{CONF}+FT$ & $61.40 \pm 1.39$ & $58.66 \pm 2.07$ & $57.09 \pm 2.79$ & $49.78 \pm 2.35$ \\ 
$\text{MARG}+FT$ & $61.91 \pm 0.46$ & $58.33 \pm 0.01$ & $52.94 \pm 2.11$ & $52.46 \pm 24.18$ \\ 
$\text{BALD}+FT$ & $66.64 \pm 0.66$ & $65.31 \pm 0.20$ & $44.34 \pm 1.34$ & $53.00 \pm 0.40$ \\ 
$\text{CORESET}+FT$ & $64.00 \pm 0.90$ & $61.13 \pm 0.31$ & $59.25 \pm 0.06$ & $53.01 \pm 0.15$ \\ 
$\text{BADGE}+FT$ & $64.40 \pm 0.76$ & $64.30 \pm 1.26 $ & $61.36 \pm 0.57$ & $53.78 \pm 0.28$ \\
$\text{LHD}+FT$ & \highlight{$75.35 \pm 0.21$} & \highlight{$75.78 \pm 0.99$} & \highlight{$70.50 \pm 0.56$} & \highlight{$64.34 \pm 0.10$} \\ 

\midrule
\bottomrule
\end{tabular}}
\end{table}

\begin{table}
\centering
\caption{Classification accuracy on SVHN with a Resnet model after 10 rounds of active learning. (\emph{+FT} represents Fine-tuning)}
\label{tab:svhn-results}
\resizebox{0.6\linewidth}{!}{
\begin{tabular}{lcccc}

\toprule
\textbf{Method} &  \textbf{\shortstack[l]{Clean}} & \textbf{\shortstack[l]{Noisy-Blank}} & \textbf{\shortstack[l]{Noisy-Diverse}} & \textbf{\shortstack[l]{Noisy-Class}} \\
\midrule
$\text{RAND}$ & $80.06 \pm 0.20$ & $80.35 \pm 0.46$ & $71.49 \pm 1.30$ & $53.93 \pm 0.98$ \\
$\text{CONF}$ & $77.84 \pm 2.45$ & $75.84 \pm 2.68$ & $76.41 \pm 1.70$ & $30.00 \pm 4.71$ \\
$\text{MARG}$ & $79.11 \pm 0.77$ & $79.84 \pm 1.57$ & $53.82 \pm 3.59$ & $38.45 \pm 3.97$ \\
$\text{BALD}$ & $78.76\pm1.69$ & $84.69\pm2.57$ & $69.10\pm4.13$ & $43.06\pm0.76$ \\
$\text{CORESET}$ & \highlight{$80.71 \pm 0.78$} & $86.97 \pm 0.94$ & \highlight{$86.61 \pm 0.21$} & $65.52 \pm 0.40$ \\
$\text{BADGE}$ & $80.48 \pm 0.81$ & $86.90 \pm 1.46$ & $84.41 \pm 1.01$ & $59.98 \pm 0.55$ \\
$\text{LHD}$ & $78.50 \pm 0.99$ & \highlight{$86.97 \pm 0.12$} & $82.98 \pm 0.67$ & \highlight{$66.01 \pm 0.76$} \\

\midrule
\textbf{Method} & \textbf{\shortstack[l]{Clean}} & \textbf{\shortstack[l]{Noisy-Blank}} & \textbf{\shortstack[l]{Noisy-Diverse}} & \textbf{\shortstack[l]{Noisy-Class}} \\
\midrule
$\text{RAND}+FT$ & $90.60 \pm 0.67$ & $85.9 \pm 0.38$ & $71.49 \pm 1.30$ & $53.93 \pm 0.98$ \\
$\text{CONF}+FT$ & $89.92 \pm 0.81$ & $89.85\pm1.03$ & $83.31 \pm 2.40$ & $57.28 \pm 1.02$ \\
$\text{MARG}+FT$ & $89.73 \pm 1.01$ & $90.76 \pm 0.45$ & $84.74 \pm 2.45$ & $68.12\pm3.79$ \\
$\text{BALD}+FT$ & $91.10 \pm 0.23$ & $90.90 \pm 0.36$ & $67.21 \pm 1.34$ & $63.73\pm4.80$ \\
$\text{CORESET}+FT$ & $87.81 \pm 1.58$ & $89.21 \pm 0.59$ & $87.46 \pm 0.24$ & $70.92 \pm 0.36$ \\
$\text{BADGE}+FT$ & $92.49 \pm 1.12$ & $91.27 \pm 0.87$ & $88.76 \pm 0.63$ & $71.50 \pm 0.22$ \\
$\text{LHD}+FT$ & \highlight{$94.26 \pm 0.12$} & \highlight{$93.54 \pm 0.23$} & \highlight{$90.92 \pm 0.84$} & \highlight{$75.51 \pm 1.09$} \\
\bottomrule
\end{tabular}}
\end{table}

\section{Experiments}
\label{sec:exp}
We experimented with different active learning algorithms, datasets, and noising benchmarks. In all experiments, we started with 2000 labeled points and queried $k = 1000$ examples in each round of active learning, for a total of 10 rounds. We evaluate the performance of two benchmarking datasets - (1) CIFAR10 \cite{Krizhevsky09learningmultiple}, which contains colored images belonging to 10 classes, and (2) SVHN \cite{Netzer2011ReadingDI}, which contains images of street numbers on houses. In all cases, the data pool comprises 80\% noisy data points and 20\% clean data points. Experiments were conducted on the ResNet18 architecture. Our implementation uses the BADGE's codebase \cite{ash2019deep}.


\begin{figure}[!h]
    \centering
    \includegraphics[width=0.6\linewidth]{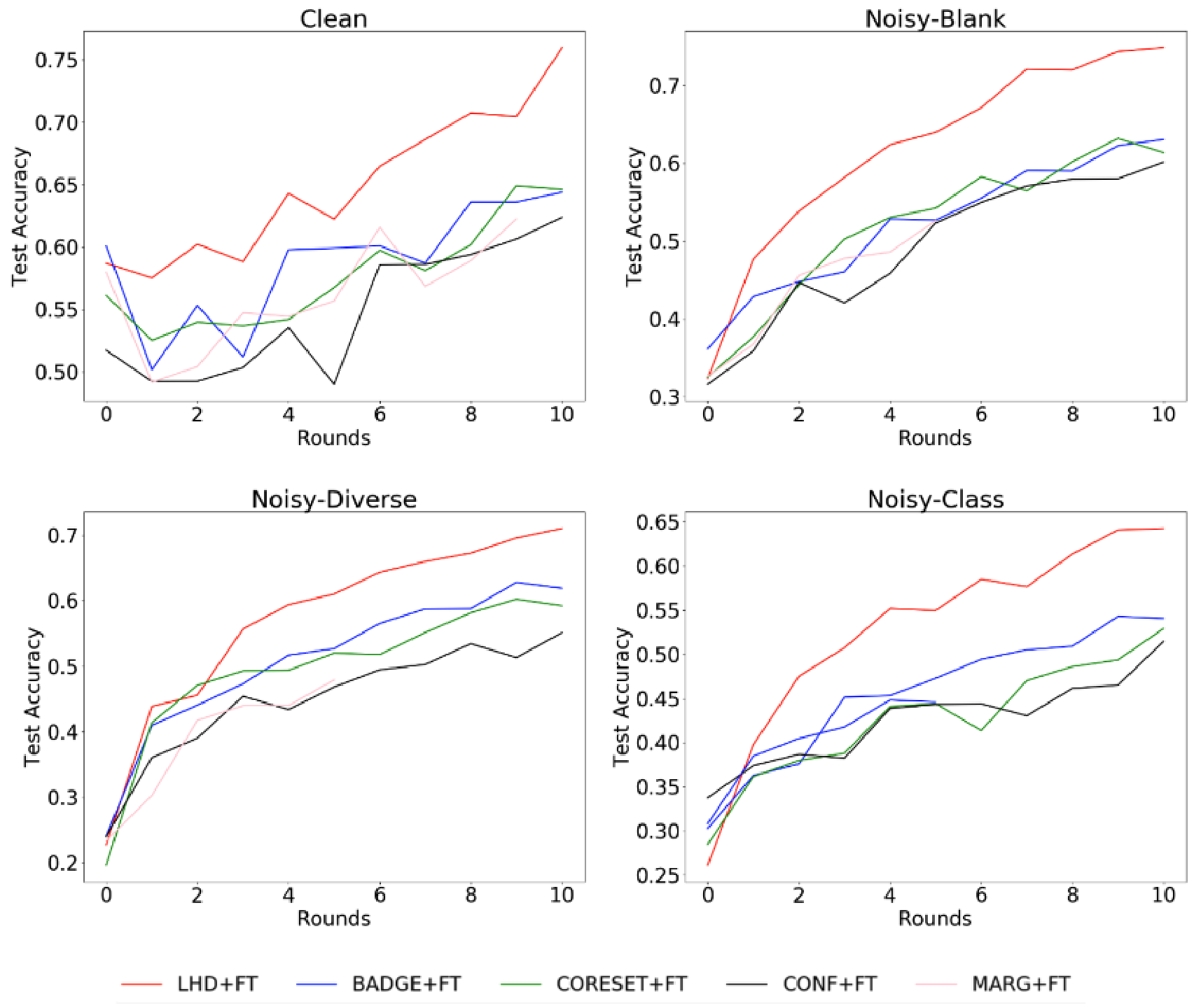}
    \caption{Performance over multiple rounds of active learning acquisition on CIFAR10 dataset. The training is complemented with fine-tuning. \textsc{LHD} outperforms other techniques by a significant margin.}
    \label{fig:cifar10finetune}
\end{figure}


\begin{figure}[!h]
    \centering
    \includegraphics[width=0.6\linewidth]{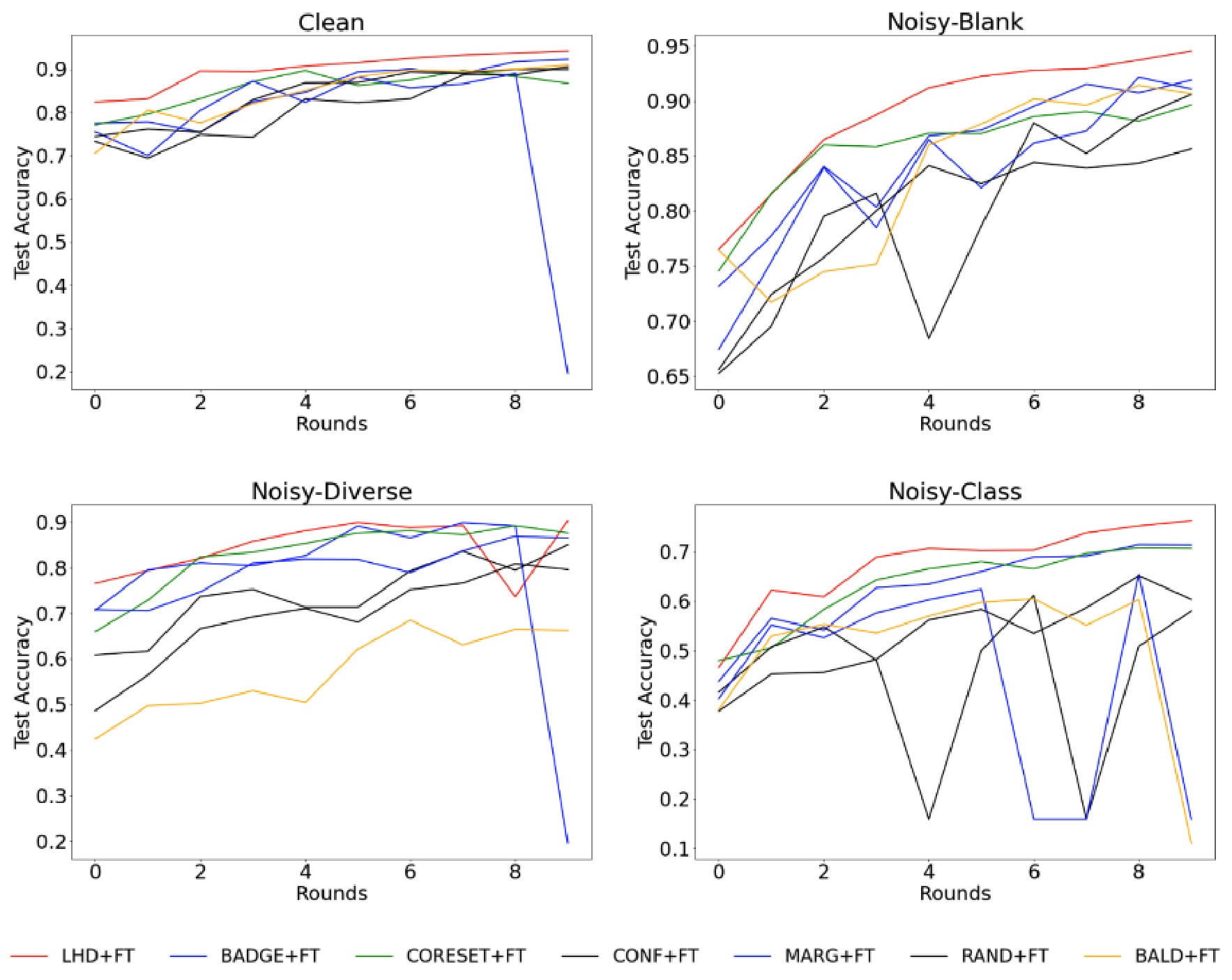}
    \caption{Performance over multiple rounds of active learning acquisition on SVHN dataset. The training is complemented with fine-tuning. \textsc{LHD} outperforms other techniques by a significant margin.}
    \label{fig:svhnfinetune}
\end{figure}

From the analysis of the final results for CIFAR-10 and SVHN shown in Table~\ref{tab:cifar-all} and Table~\ref{tab:svhn-results} respectively, we draw 3 main conclusions.

Firstly, we see that the uncertainty-based techniques (CONF and MARG) perform sub-optimally on the noisy data, even worse than random sampling. Methods that factor in diversity has much better performance. This results are supported by Table \ref{tab:cifar-clean-percentages} and Table \ref{tab:svhn-clean-percentages}, which shows the percentage of clean (non-noisy) examples which are selected over the course of training by different active learning algorithms. The uncertainty-based techniques sample mainly noisy examples, whereas diversity based methods select mainly clean examples. The proposed LHD method has comparable performance to the other diversity based methods.

Secondly, when the fine-tuning technique is added, we observe a significant and consistent boost in the performance of all algorithms across all settings.

Lastly, with fine-tuning, the proposed \textsc{LHD} algorithm outperforms all other techniques. We analyze the performance of the techniques with fine-tuning throughout the 10 active learning rounds, as shown in Figure~\ref{fig:cifar10finetune} and Figure~\ref{fig:svhnfinetune}, which shows that \textsc{LHD} outperforms all the other techniques throughout active learning rounds. Furthermore, Figure~\ref{fig:svhnfinetune} demonstrates catastrophic failure of uncertainty-based active learning techniques in certain rounds of active learning, which happens due to over sampling of noisy examples in those rounds.


Additional experiments for different levels of noise can be found in Table \ref{tab:cifar-different-noise}, and Table \ref{tab:time} shows the time taken by different active learning algorithms to complete 10 rounds of acquisition on a Tesla V100 GPU. We see that \textsc{LHD} takes much less time compared to the diversity-based techniques, viz. BADGE and CORESET.

\section{Theoretical Analysis of Confidence Sampling in Heteroskedastic Setting}
\label{sec:uncertainty-failure}

While previous work \cite{kawaguchi2020ordered} has suggested that selecting high-loss examples can accelerate training, our experiments show that selecting examples with the lowest prediction confidence can fail catastrophically on heteroskedastic distributions. In this section, we provide a theoretical account of why training only on high loss examples (which would have low confidence given a well-calibrated model) can lead to poor performance on heteroskedastic distributions.  

\subsection{Notation}
Let  $\Dcal=((x_i,y_i))_{i=1}^n$ be a training dataset of  $n$ samples where $x_i \in\Xcal \subseteq  \RR^{d_x}$ is the input vector and $y_i \in\Ycal \subseteq  \RR^{d_y}$ is the target vector for the $i$-th sample. A standard objective function is 

\begin{center}
    $L(\theta ;\Dcal) := \frac{1}{n}\sum_{i=1}^n  L_i(\theta;\Dcal)$
\end{center} where $\theta\in\RR^{d_\theta}$ is the parameter vector of the prediction model $f(\hspace{1pt}\cdot\hspace{2pt};\theta): \RR^{d_x}  \rightarrow \RR^{d_y}$, and $L_i(\theta;\Dcal) := \ell(f(x_{i};\theta),y_{i})$ with the function $\ell: \RR^{d_y} \times\Ycal \rightarrow \RR_{\ge 0}$  is the loss of the $i$-th sample. 

Similar to the notation of order statistics, we first introduce the notation of ordered indexes: given a model parameter $\theta$, let $ L_{(1)}(\theta;\Dcal) \ge L_{(2)}(\theta;\Dcal) \ge \cdots \ge L_{(n)}(\theta;\Dcal)$ be the decreasing values of the individual losses $L_1(\theta;\Dcal),\ldots,L_n(\theta;\Dcal)$, where $(j)\in\{1,\dots,n\}$  (for all $j \in \{1,\dots,n\}$). That is, $\{(1),\ldots,(n)\}$ as a perturbation of $\{1,\dots,n\}$ defines the order of sample indexes by loss values. Whenever we  encounter ties on the values, we employ an arbitrary fixed tie-breaking rule in order to ensure the uniqueness of such an order.

Denote $r_i(\theta; \Dcal) =  \sum_{j=1}^{n} \mathbbm{1}\{i=(j)\}\gamma_{j}$ where $(j)$ depends on $(\theta, \Dcal)$. Given an arbitrary set $\Theta\subseteq\RR^{d_\theta}$, we define  $\Rfra_{n}(\Theta)$ as the (standard) Rademacher
complexity of the set $\{ (x,y)\mapsto\ell(f(x;\theta),y): \theta \in \Theta\}$: 

\begin{center}
    $\Rfra_{n}(\Theta) =\EE_{\bar \Dcal,\xi} \left[\sup_{\theta \in \Theta} \frac{1}{n}\sum_{i=1}^n  \xi_{i}\ell(f(\bar x_{i};\theta), \bar y_{i}) \right]$
\end{center}
where $\overline\Dcal=((\bar x_i, \bar y_i))_{i=1}^n$, and $\xi_{1},\dots,\xi_{n}$ are independent uniform random variables taking values in $\{-1,1\}$ (i.e., Rademacher variables). Given a tuple $(\ell, f, \Theta,\Xcal,\Ycal)$, define $M$ as the least upper bound on the difference of individual loss values:  

\begin{center}
    $|\ell(f(x;\theta),y)-\ell(f(x';\theta),y')| \le M $
\end{center}
for all $\theta \in\Theta $ and all  $(x,y),(x',y') \in \Xcal \times \Ycal$. For example, $M=1$ if $\ell$ is the  0-1 loss function.  We can then write: 

\begin{center}
    $\hat \Rfra_{n}(\Theta) =\EE_{\xi} \left[\sup_{\theta \in \Theta} \frac{1}{n}\sum_{i=1}^n  \xi_{i}\ell(f(x_{i};\theta), y_{i}) \right]$. 
\end{center}

\subsection{Preliminaries}
The previous paper \cite{kawaguchi2020ordered} proves that the stochastic optimization method that uses a gradient estimator that is purposely biased toward those samples with the current top-$q$  losses (i.e., ordered SGD)  implicitly minimizes a new  objective function of $L_{q}(\theta;\Dcal) = \frac{1}{q} \sum_{j=1}^n \gamma_{j} L_{(j)}(\theta;\Dcal)$, for any $\Dcal$ (including $g(\Dcal)$), in the sense that such a  gradient estimator is an unbiased estimator of a (sub-) gradient of $L_q(\theta;\Dcal)$, instead of $L(\theta;\Dcal)$. Accordingly, the top-$q$-biased stochastic optimization method converges in terms of $L_q$ instead of $L$. 

Building up on this result, we consider generalization properties of the top-$q$-biased stochastic optimization  with the presence of additional label noises in training data. We want to minimize the expected loss, $\EE_{(x,y)\sim \Pcal}[\ell(f(x;\theta),y)]$, by minimizing the training loss $L_{q}(\theta;g(\Dcal))$, where $g(\mathcal{D})=((\gx(x_i),\gy(y_i)))_{i=1}^n$ is potentially corrupted by arbitrary noise and corruption effects within arbitrary fixed functions $\gx$ and $\gy$ for $i=1\dots,n$, where
$(x_i,y_i)\sim \Pcal$. Thus, we want to analyze the generalization gap: 

\begin{center}
    $\EE_{(x,y)\sim \Pcal}[\ell(f(x;\theta),y)]- L_{q}(\theta;g(\Dcal))$
\end{center}  

\cite{kawaguchi2020ordered} showed the benefit of the top-$q$-biased stochastic optimization method in terms of generalization when $\gx$ and $\gy$ are identity functions and thus when the distributions are the same for both expected loss and training loss.  In contrast, in our setting,  the  distributions are different for expected loss and training loss with potential noise corruptions through  $\gx$ and $\gy$.

\subsection{Generalization Bound for Biased Query Samples}


\begin{thm} \label{thm:1}
Let $\Theta$ be a fixed subset of $\RR^{d_\theta}$. Then, for any $\delta>0$, with probability at least $1-\delta$ over an iid draw of $n$  examples $\mathcal{D}=((x_i,y_i))_{i=1}^n$, the following holds for all $\theta \in \Theta$:


\begin{center}
\label{eq:ge_1a}
  $\EE_{(x,y)}[\ell(f(x;\theta),y)]
    \le L_{q}(\theta;g(\Dcal)) + 2\hat \Rfra_{n}(\Theta) +
    M\left(2+\frac{s}{q}\right) \sqrt{\frac{\ln (2/\delta)}{2n}} - \Qcal_{n,q}(\Theta,g)$  
\end{center} where we define the top-$q$-biased factor as 

\begin{center}
    $\Qcal_{n,q}(\Theta,g) := \EE_{\bar \Dcal} \Big[\inf_{\theta \in \Theta} \frac{1}{n} \sum_{i=1}^n \big[\frac{r_i(\theta;g(\bar \Dcal))n}{q} \cdot  \ell(f(\gx(\bar x_{i});\theta), \gy(\bar y_{i}))-\ell(f(\bar x_{i};\theta), \bar y_{i}) \big] \Big].$
\end{center}
\end{thm}

The expected error $\EE_{(x,y)}[\ell(f(x;\theta),y)]$ in the left hand side of Theorem~\ref{thm:1} is a standard objective for generalization, whereas the right-hand side contains the data corruption function $g$. Here, we typically have $\Rfra_{n}(\Theta)=O(1/\sqrt{n})$ in terms of $n$.  For example,  consider the standard feedforward deep neural networks of  the form $f(x)= (\omega_T \circ \sigma_{T-1} \circ \omega_{T-1} \circ \sigma_{T-2}\cdots \sigma_1 \circ \omega_1)(x)$ where $T$ is the number of  layers, $\omega_l(a)=W_{l}a$ with $\|W_{l}\|_F \le M_l$, and $\sigma_{l}$ is an element-wise nonlinear activation function that is  1-Lipschitz and positive homogeneous (e.g., ReLU).
Then, if  $\|x\|\le B$ for all $x \in \Xcal$, using Theorem 1 of \cite{golowich2018size}, we have that: 

\begin{center}
    $\hat \Rfra_{n}(\Theta)\le \frac{B (\sqrt{2 \log(2)T }+1)(\prod_{l=1}^T M_l)}{\sqrt{n}}$
\end{center} 

In Theorem \ref{thm:1}, we can see that a label noise corruption $g$ can lead to the failure of the top-$q$-biased stochastic optimization
via increasing the training loss $L_{q}(\theta;g(\Dcal))$ and decreasing the top-$q$-biased factor $\Qcal_{n,q}(\Theta,g)$. Here, if there is no corruption $g$ (i.e., if $\gx$ and $\gy$ are identity functions), then we have that $\Qcal_{n,q}(\Theta,g)\ge 0$ because $\Qcal_{n,q}(\Theta,g)=\EE_{\bar \Dcal} [\inf_{\theta \in \Theta} L_{q}(\theta;\bar \Dcal)-L(\theta ;\bar \Dcal)] \ge 0$ due to $L_{q}(\theta;\bar \Dcal)-L(\theta ;\bar \Dcal)\ge 0$ for any $\theta$ and $\bar \Dcal$ when $\gx$ and $\gy$ are identity functions. Thus, the top-$q$-biased factor $\Qcal_{n,q}(\Theta,g)$ can explain the improvement of the generalization of the top-$q$-biased stochastic optimization over the standard unbiased stochastic optimization. However, with the presence of  the corruption $g$,  $\frac{r_i(\theta;g(\bar \Dcal))n}{q}\ell(f(\gx(\bar x_{i});\theta), \gy(\bar y_{i}))$ can be smaller than $\ell(f(\bar x_{i};\theta), \bar y_{i})$ \textit{by fitting the corrupted noise}, resulting  $\Qcal_{n,q}(\Theta,g)<0$. This leads to a significant failure in the following sense: 

The generalization gap ($\EE_{(x,y)}[\ell(f(x;\theta),y)]
-L_{q}(\theta;g(\Dcal)) $) goes to zero as $n$ approach infinity if $\Qcal_{n,q}(\Theta,g)\ge 0$ with no data corruption, but the generalization gap no longer goes to zero as as $n$ approach infinity if $\Qcal_{n,q}(\Theta,g)< 0$  with  data corruption.

To see this, let us  look at the asymptotic case when  $n\rightarrow\infty$. Let $\Theta$ be constrained such that $\Rfra_n(\Theta)\rightarrow 0 $ as $n\rightarrow \infty$, which has been shown to be satisfied for various models and sets $\Theta$, including the standard deep neural networks above \cite{bartlett2002rademacher,mohri2012foundations,bartlett2017spectrally,kawaguchi2017generalization,golowich2018size}. The third term in the right-hand side of the Equation in Theorem~\ref{thm:1} disappears as $n\rightarrow\infty$. Thus, if there is no corruption (i.e., if $\gx$ and $\gy$ are identity functions), it holds with high probability that  

\begin{center}
    $\EE_{(x,y)}[\ell(f(x;\theta),y)] \le L_{q}(\theta;g(\Dcal))-\Qcal_{n,q}(\Theta,g)\le L_{q}(\theta;g(\Dcal))$ 
\end{center} where $L_{q}(\theta;g(\Dcal))$ is minimized by the top-$q$-biased stochastic optimization. From this viewpoint, the top-$q$-biased stochastic optimization minimizes the expected error for generalization when $n\rightarrow\infty$, if there is no corruption. However, if there is  corruption, 

\begin{center}
    $\EE_{(x,y)}[\ell(f(x;\theta),y)] \le L_{q}(\theta;g(\Dcal))-\Qcal_{n,q}(\Theta,g)\nleq\ L_{q}(\theta;g(\Dcal))$ 
\end{center}

Hence $\EE_{(x,y)}[\ell(f(x;\theta),y)]-L_{q}(\theta;g(\Dcal)) \nrightarrow 0$ even in the asymptotic case. The full proof of Theorem~\ref{thm:1} can be found in Section \ref{sec:proof} of the Appendix.
\section{Conclusion}
Neural Active Learning is an active area of research, with many new techniques competing to achieve better results. Our work seeks to challenge the commonly held assumption that the training data is independent and identically distributed (I.I.D) for the active learning setup. We show that the uncertainty-based techniques that are competitive on homogeneous datasets with little label noise can fail catastrophically when presented with diverse heteroskedastic distributions. We also explore the different techniques (diversity-based sampling, fine-tuning, and LHD) that can be used to mitigate these failures. We believe that research has to be done exploring the various possible data distributions, since there is no guarantee that the I.I.D assumption holds for real-world data, and develop algorithms that are robust to the different data distributions.


\bibliographystyle{unsrt}  
\bibliography{references}

\begin{thebibliography}{10}

\bibitem{Wang95quality}
R.Y. Wang, V.C. Storey, and C.P. Firth.
\newblock A framework for analysis of data quality research.
\newblock {\em IEEE Transactions on Knowledge and Data Engineering},
  7(4):623--640, 1995.

\bibitem{self-instruct}
Yizhong Wang, Yeganeh Kordi, Swaroop Mishra, Alisa Liu, Noah~A. Smith, Daniel
  Khashabi, and Hannaneh Hajishirzi.
\newblock Self-instruct: Aligning language model with self generated
  instructions, 2022.

\bibitem{alpaca}
Rohan Taori, Ishaan Gulrajani, Tianyi Zhang, Yann Dubois, Xuechen Li, Carlos
  Guestrin, Percy Liang, and Tatsunori~B. Hashimoto.
\newblock Stanford alpaca: An instruction-following llama model, 2023.

\bibitem{llama-adapter}
Renrui Zhang, Jiaming Han, Aojun Zhou, Xiangfei Hu, Shilin Yan, Pan Lu,
  Hongsheng Li, Peng Gao, and Yu~Qiao.
\newblock Llama-adapter: Efficient fine-tuning of language models with
  zero-init attention, 2023.

\bibitem{kawaguchi2020ordered}
Kenji Kawaguchi and Haihao Lu.
\newblock Ordered sgd: A new stochastic optimization framework for empirical
  risk minimization.
\newblock In {\em International Conference on Artificial Intelligence and
  Statistics}, pages 669--679, 2020.

\bibitem{D11}
Sanjoy Dasgupta.
\newblock Two faces of active learning.
\newblock {\em Theoretical computer science}, 2011.

\bibitem{chaudhuri2015convergence}
Kamalika Chaudhuri, Sham Kakade, Praneeth Netrapalli, and Sujay Sanghavi.
\newblock Convergence rates of active learning for maximum likelihood
  estimation.
\newblock In {\em Advances in Neural Information Processing Systems}, 2015.

\bibitem{chaudhuri2017active}
Kamalika Chaudhuri, Prateek Jain, and Nagarajan Natarajan.
\newblock Active heteroscedastic regression.
\newblock In {\em International Conference on Machine Learning}, pages
  694--702. PMLR, 2017.

\bibitem{tur2005combining}
Gokhan Tur, Dilek Hakkani-T{\"u}r, and Robert~E Schapire.
\newblock Combining active and semi-supervised learning for spoken language
  understanding.
\newblock {\em Speech Communication}, 2005.

\bibitem{gal2017deep}
Yarin Gal, Riashat Islam, and Zoubin Ghahramani.
\newblock Deep bayesian active learning with image data.
\newblock In {\em International Conference on Machine Learning}, 2017.

\bibitem{sener2018active}
Ozan Sener and Silvio Savarese.
\newblock Active learning for convolutional neural networks: A core-set
  approach.
\newblock In {\em International Conference on Learning Representations}, 2018.

\bibitem{geifman2017deep}
Yonatan Geifman and Ran El-Yaniv.
\newblock Deep active learning over the long tail.
\newblock {\em arXiv:1711.00941}, 2017.

\bibitem{gissin2019discriminative}
Daniel Gissin and Shai Shalev-Shwartz.
\newblock Discriminative active learning.
\newblock {\em arXiv:1907.06347}, 2019.

\bibitem{wang2015querying}
Zheng Wang and Jieping Ye.
\newblock Querying discriminative and representative samples for batch mode
  active learning.
\newblock {\em Transactions on Knowledge Discovery from Data}, 2015.

\bibitem{pmlr-v28-chen13b}
Yuxin Chen and Andreas Krause.
\newblock Near-optimal batch mode active learning and adaptive submodular
  optimization.
\newblock In {\em International Conference on Machine Learning}, 2013.

\bibitem{wei2015submodularity}
Kai Wei, Rishabh Iyer, and Jeff Bilmes.
\newblock Submodularity in data subset selection and active learning.
\newblock In {\em International Conference on Machine Learning}, 2015.

\bibitem{batchbald}
Andreas Kirsch, Joost van Amersfoort, and Yarin Gal.
\newblock Batchbald: Efficient and diverse batch acquisition for deep bayesian
  active learning.
\newblock In {\em Advances in Neural Information Processing Systems}, 2019.

\bibitem{ash2021gone}
Jordan~T Ash, Surbhi Goel, Akshay Krishnamurthy, and Sham Kakade.
\newblock Gone fishing: Neural active learning with fisher embeddings.
\newblock {\em arXiv preprint arXiv:2106.09675}, 2021.

\bibitem{ash2019deep}
Jordan~T Ash, Chicheng Zhang, Akshay Krishnamurthy, John Langford, and Alekh
  Agarwal.
\newblock Deep batch active learning by diverse, uncertain gradient lower
  bounds.
\newblock {\em International Conference on Learning Representations}, 2020.

\bibitem{steinhardt2017certified}
Jacob Steinhardt, Pang~Wei Koh, and Percy Liang.
\newblock Certified defenses for data poisoning attacks.
\newblock In {\em Proceedings of the 31st International Conference on Neural
  Information Processing Systems}, pages 3520--3532, 2017.

\bibitem{sagawa2019distributionally}
Shiori Sagawa, Pang~Wei Koh, Tatsunori~B Hashimoto, and Percy Liang.
\newblock Distributionally robust neural networks.
\newblock In {\em International Conference on Learning Representations}, 2019.

\bibitem{lin2021active}
Jing Lin, Ryan Luley, and Kaiqi Xiong.
\newblock Active learning under malicious mislabeling and poisoning attacks.
\newblock {\em arXiv preprint arXiv:2101.00157}, 2021.

\bibitem{vicarte2021double}
Jose Rodrigo~Sanchez Vicarte, Gang Wang, and Christopher~W. Fletcher.
\newblock Double-cross attacks: Subverting active learning systems.
\newblock In {\em 30th {USENIX} Security Symposium ({USENIX} Security 21)},
  pages 1593--1610. {USENIX} Association, August 2021.

\bibitem{similar2021al}
Suraj Kothawade, Nathan Beck, Krishnateja Killamsetty, and Rishabh Iyer.
\newblock Similar: Submodular information measures based active learning in
  realistic scenarios.
\newblock In M.~Ranzato, A.~Beygelzimer, Y.~Dauphin, P.S. Liang, and J.~Wortman
  Vaughan, editors, {\em Advances in Neural Information Processing Systems},
  volume~34, pages 18685--18697. Curran Associates, Inc., 2021.

\bibitem{contrastive2021al}
Pan Du, Suyun Zhao, Hui Chen, Shuwen Chai, Hong Chen, and Cuiping Li.
\newblock Contrastive coding for active learning under class distribution
  mismatch.
\newblock In {\em 2021 IEEE/CVF International Conference on Computer Vision
  (ICCV)}, pages 8907--8916, 2021.

\bibitem{shu2019meta}
Jun Shu, Qi~Xie, Lixuan Yi, Qian Zhao, Sanping Zhou, Zongben Xu, and Deyu Meng.
\newblock Meta-weight-net: Learning an explicit mapping for sample weighting.
\newblock {\em arXiv preprint arXiv:1902.07379}, 2019.

\bibitem{cao2020heteroskedastic}
Kaidi Cao, Yining Chen, Junwei Lu, Nikos Arechiga, Adrien Gaidon, and Tengyu
  Ma.
\newblock Heteroskedastic and imbalanced deep learning with adaptive
  regularization.
\newblock {\em arXiv preprint arXiv:2006.15766}, 2020.

\bibitem{ANTOS20102712}
András Antos, Varun Grover, and Csaba Szepesvári.
\newblock Active learning in heteroscedastic noise.
\newblock {\em Theoretical Computer Science}, 411(29):2712--2728, 2010.
\newblock Algorithmic Learning Theory (ALT 2008).

\bibitem{Zhu2003CombiningAL}
Xiaojin Zhu, John~D. Lafferty, and Zoubin Ghahramani.
\newblock Combining active learning and semi-supervised learning using gaussian
  fields and harmonic functions.
\newblock In {\em ICML 2003}, 2003.

\bibitem{consistencysslal}
Mingfei Gao, Zizhao Zhang, Guo Yu, Sercan~O. Arik, Larry~S. Davis, and Tomas
  Pfister.
\newblock Consistency-based semi-supervised active learning: Towards minimizing
  labeling cost, 2019.

\bibitem{bilevelopt}
Zalán Borsos, Marco Tagliasacchi, and Andreas Krause.
\newblock Semi-supervised batch active learning via bilevel optimization, 2020.

\bibitem{ssl2019asr}
Thomas Drugman, Janne Pylkkonen, and Reinhard Kneser.
\newblock Active and semi-supervised learning in asr: Benefits on the acoustic
  and language models.
\newblock 2019.

\bibitem{Rhee2017ActiveAS}
Phill~K. Rhee, Enkhbayar Erdenee, Shin~Dong Kyun, Minhaz~Uddin Ahmed, and
  SongGuo Jin.
\newblock Active and semi-supervised learning for object detection with
  imperfect data.
\newblock {\em Cognitive Systems Research}, 45:109--123, 2017.

\bibitem{Sinha2019vaal}
Samarth Sinha, Sayna Ebrahimi, and Trevor Darrell.
\newblock Variational adversarial active learning, 2019.

\bibitem{wang2014new}
Dan Wang and Yi~Shang.
\newblock A new active labeling method for deep learning.
\newblock In {\em International Joint Conference on Neural Networks}, 2014.

\bibitem{roth2006margin}
Dan Roth and Kevin Small.
\newblock Margin-based active learning for structured output spaces.
\newblock In {\em European Conference on Machine Learning}, 2006.

\bibitem{bald2011}
Neil Houlsby, Ferenc Huszár, Zoubin Ghahramani, and Máté Lengyel.
\newblock Bayesian active learning for classification and preference learning,
  2011.

\bibitem{Arthur07kmeans}
David Arthur and Sergei Vassilvitskii.
\newblock K-means++: The advantages of careful seeding.
\newblock In {\em Proceedings of the Eighteenth Annual ACM-SIAM Symposium on
  Discrete Algorithms}, SODA '07, page 1027–1035, USA, 2007. Society for
  Industrial and Applied Mathematics.

\bibitem{sohn2020fixmatch}
Kihyuk Sohn, David Berthelot, Nicholas Carlini, Zizhao Zhang, Han Zhang,
  Colin~A Raffel, Ekin~Dogus Cubuk, Alexey Kurakin, and Chun-Liang Li.
\newblock Fixmatch: Simplifying semi-supervised learning with consistency and
  confidence.
\newblock {\em Advances in Neural Information Processing Systems}, 33:596--608,
  2020.

\bibitem{Krizhevsky09learningmultiple}
Alex Krizhevsky.
\newblock Learning multiple layers of features from tiny images.
\newblock Technical report, 2009.

\bibitem{Netzer2011ReadingDI}
Yuval Netzer, Tiejie Wang, Adam Coates, A.~Bissacco, Bo~Wu, and A.~Ng.
\newblock Reading digits in natural images with unsupervised feature learning.
\newblock 2011.

\bibitem{golowich2018size}
Noah Golowich, Alexander Rakhlin, and Ohad Shamir.
\newblock Size-independent sample complexity of neural networks.
\newblock In {\em Conference On Learning Theory}, pages 297--299. PMLR, 2018.

\bibitem{bartlett2002rademacher}
Peter~L Bartlett and Shahar Mendelson.
\newblock Rademacher and gaussian complexities: Risk bounds and structural
  results.
\newblock {\em Journal of Machine Learning Research}, 3(Nov):463--482, 2002.

\bibitem{mohri2012foundations}
Mehryar Mohri, Afshin Rostamizadeh, and Ameet Talwalkar.
\newblock {\em Foundations of machine learning}.
\newblock MIT press, 2012.

\bibitem{bartlett2017spectrally}
Peter~L Bartlett, Dylan~J Foster, and Matus~J Telgarsky.
\newblock Spectrally-normalized margin bounds for neural networks.
\newblock In {\em Advances in Neural Information Processing Systems}, pages
  6240--6249, 2017.

\bibitem{kawaguchi2017generalization}
Kenji Kawaguchi, Leslie~Pack Kaelbling, and Yoshua Bengio.
\newblock Generalization in deep learning.
\newblock {\em In Mathematics of Deep Learning, Cambridge University Press, to
  appear. Prepint available as: MIT-CSAIL-TR-2018-014, Massachusetts Institute
  of Technology}, 2018.

\end{thebibliography}

\clearpage

\appendix

\section{Additional Experiments}
\textbf{Percentage of clean examples selected from the unlabeled pool: }
We investigated the fraction of clean (non-noisy) examples which are selected over the course of training (Table~\ref{tab:cifar-clean-percentages} and \ref{tab:svhn-clean-percentages}). The algorithms that do not factor in diversity (viz., CONF and MARG) while selecting examples end up selecting a very small fraction for clean examples and perform no better than random sampling. However, techniques that favor diversity select a large fraction of clean examples. For example, each of CORESET, BADGE, and LHD almost perfectly filter out the \emph{Noisy-Blank} examples.

\begin{table}
\centering
\caption{Percentage of clean samples selected for CIFAR10}
\label{tab:cifar-clean-percentages}
\resizebox{0.6\linewidth}{!}{
\begin{tabular}{lcccc}

\toprule
\textbf{Resnet without fine-tuning} & \textbf{\shortstack[l]{Noisy-Blank}} & \textbf{\shortstack[l]{Noisy-Diverse}} & \textbf{\shortstack[l]{Noisy-Class}} \\
\midrule
$\text{RAND}$ & $20.25$ & $20.53$ & $17.96$ \\
$\text{CONF}$ & $10.41$ & $46.99$ & $11.42$ \\ 
$\text{MARG}$ & $21.98$ & $4.61$ & $10.83$ \\ 
$\text{BALD}$ & $80.62$ & $18.62$ & $12.47$ \\ 
$\text{CORESET}$ & $100.00$ & $100.00$ & $94.25$ \\ 
$\text{BADGE}$ & $99.90$ & $90.0$ & $80.70$ \\ 
$\text{LHD}$ & $99.90$ & $53.97$ & $82.55$ \\ 

\midrule
\textbf{Resnet with fine-tuning} & \textbf{\shortstack[l]{Noisy-Blank}} & \textbf{\shortstack[l]{Noisy-Diverse}} & \textbf{\shortstack[l]{Noisy-Class}} \\
\midrule
$\text{RAND}$ & $19.70$ & $19.20$ & $16.50$ \\ 
$\text{CONF}$ & $100.00$ & $83.16$ & $87.84$ \\ 
$\text{MARG}$ & $100.00$ & $58.34$  & $89.08$\\ 
$\text{BALD}$ & $99.90$ & $12.68$ & $79.85$ \\ 
$\text{CORESET}$ & $100.00$ & $100.00$ & $81.65$ \\ 
$\text{BADGE}$& $99.90$ & $92.705$ & $87.41$ \\
$\text{LHD}$ & $99.90$ & $77.88$ & $82.80$ \\ 

\midrule
\bottomrule
\end{tabular}}
\end{table}

\begin{table}
\centering
\caption{Percentage of clean samples selected for SVHN}
\label{tab:svhn-clean-percentages}
\resizebox{0.6\linewidth}{!}{
\begin{tabular}{lcccc}

\toprule
\textbf{ResNet without fine-tuning} & \textbf{\shortstack[l]{Noisy-Blank}} & \textbf{\shortstack[l]{Noisy-Diverse}} & \textbf{\shortstack[l]{Noisy-Class}} \\
\midrule
$\text{RAND}$ & $19.84$ & $20.16$ & $19.81$ \\
$\text{CONF}$ & $10.00$ & $37.59$ & $10.46$ \\ 
$\text{MARG}$ & $17.26$ & $7.52$ & $8.01$ \\ 
$\text{BALD}$ & $69.09$ & $25.58$ & $13.12$ \\ 
$\text{CORESET}$ & $100.00$ & $100.00$ & $100.00$ \\ 
$\text{BADGE}$ & $99.90$ & $90.00$ & $90.00$ \\ 
$\text{LHD}$ & $99.90$ & $77.80$ & $76.71$ \\ 

\midrule
\textbf{ResNet with fine-tuning} & \textbf{\shortstack[l]{Noisy-Blank}} & \textbf{\shortstack[l]{Noisy-Diverse}} & \textbf{\shortstack[l]{Noisy-Class}} \\
\midrule
$\text{RAND}$ & $20.00$ & $20.01$ & $16.36$ \\ 
$\text{CONF}$ & $100.00$ & $59.20$ & $60.44$ \\ 
$\text{MARG}$ & $100.00$ & $24.77$  & $66.01$\\ 
$\text{BALD}$ & $98.47$ & $11.27$  & $40.60$\\ 
$\text{CORESET}$ & $100.00$ & $100.00$ & $71.35$ \\ 
$\text{BADGE}$& $99.90$ & $91.44$ & $73.30$ \\
$\text{LHD}$ & $99.90$ & $58.06$ & $69.83$ \\ 




\midrule
\bottomrule
\end{tabular}}
\end{table}

A point to be noted is that an algorithm that samples more clean examples might not necessarily outperform other algorithms that sample less clean examples. For instance, while CORESET samples more clean examples than LHD in the \emph{Noisy-Class} setup, LHD outperforms CORESET (Table~\ref{tab:cifar-all}). This is because sampling the clean examples is just one part of the problem. The clean examples also have to optimize the model's performance. Even though LHD samples lesser clean examples, it selects higher quality clean examples, leading to better overall performance.

Another point is that the CORESET and BADGE algorithms are able to achieve high levels of clean percentages because the number of unique noisy examples in the heteroskedastic benchmarks are limited. Future work can be done exploring more challenging heteroskedastic datasets.

\textbf{Different levels of noise: }
We ran some ablation studies to investigate the performance of the active learning algorithms under milder noise conditions. Without any loss of generality, Table~\ref{tab:cifar-different-noise} shows the results for different methods on out most challenging dataset - \emph{Noisy-Class} CIFAR10.

\begin{table}
\centering
\caption{Classification accuracy on \emph{Noisy-Class} CIFAR10 setup for different percentages of noisy examples (all methods use fine-tuning on unlabeled data).}
\label{tab:cifar-different-noise}
\resizebox{0.5\linewidth}{!}{
\begin{tabular}{lcccc}

\toprule
\textbf{Method} & \textbf{\shortstack[l]{20\% Noise}} & \textbf{\shortstack[l]{40\% Noise}} & \textbf{\shortstack[l]{80\% Noise}} \\
\midrule
$\text{RAND}$ & $57.24\pm0.40$ & $53.39\pm0.84$ & $43.12\pm0.82$ \\
$\text{CONF}$ & $55.00\pm0.63$ & $53.66\pm2.64$ & $49.87\pm2.35$ \\ 
$\text{MARG}$ & $58.05\pm0.61$ & $57.83\pm0.39$ & $52.46\pm24.18$ \\ 
$\text{CORESET}$ & $58.21\pm0.47$ & $58.66\pm1.03$ & $53.01\pm0.15$ \\ 
$\text{BADGE}$ & $58.36\pm1.90$ & $56.97\pm1.10$ & $53.78\pm0.28$ \\ 
$\text{LHD}$ & \highlight{$68.06\pm0.34$} & \highlight{$67.15\pm0.55$} & \highlight{$64.34\pm0.10$} \\ 

\bottomrule
\end{tabular}}
\end{table}

\textbf{Conflicting gradients for noisy examples: }
In Section \ref{lhd}, we posit that (1) for noisy examples, the conflicting gradients result in the model converging quickly to the suboptimal solution and undergoing little change throughout the training, and (2) for the clean examples, the model learns the correct solution and converges to an optimal solution by undergoing changes throughout the training. 

To substantiate this claim, we investigate the loss and the average gradient magnitude for noisy and clean examples (Figure \ref{fig:noisy-vs-clean-loss} and \ref{fig:conflicting-gradients}, respectively). Since one noisy example can be mapped to more than one label, the model will observe conflicting gradients for these examples during training. So, the model quickly converges to a suboptimal solution for these examples. For clean examples, on the other hand, there is a 1-to-1 image-to-label mapping that is learned by the model as it trains.

\begin{figure}[ht!]
    \centering
    \includegraphics[width=0.75\linewidth]{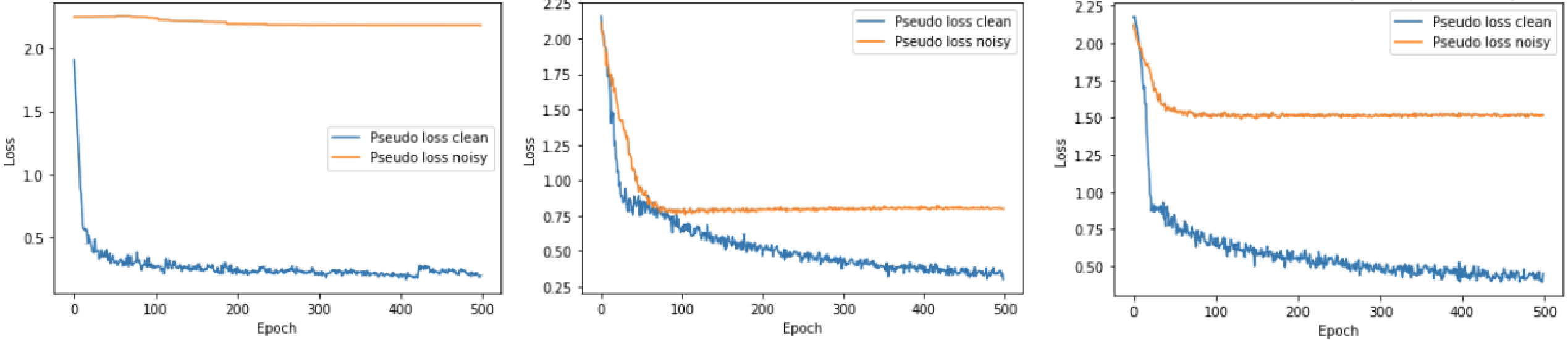}
    \caption{Loss curves for noisy examples (orange) and clean examples (blue) in the first round of training on CIFAR-10 using ResNet for the three setups (noisy-blank, noisy-diverse, and noisy-class from left to right). As can be seen, the loss quickly converges to a suboptimal solution for noisy examples, while it continues to drop gradually for clean examples.}
    \label{fig:noisy-vs-clean-loss}
\end{figure}

\begin{figure}[ht!]
    \centering
    \includegraphics[width=0.75\linewidth]{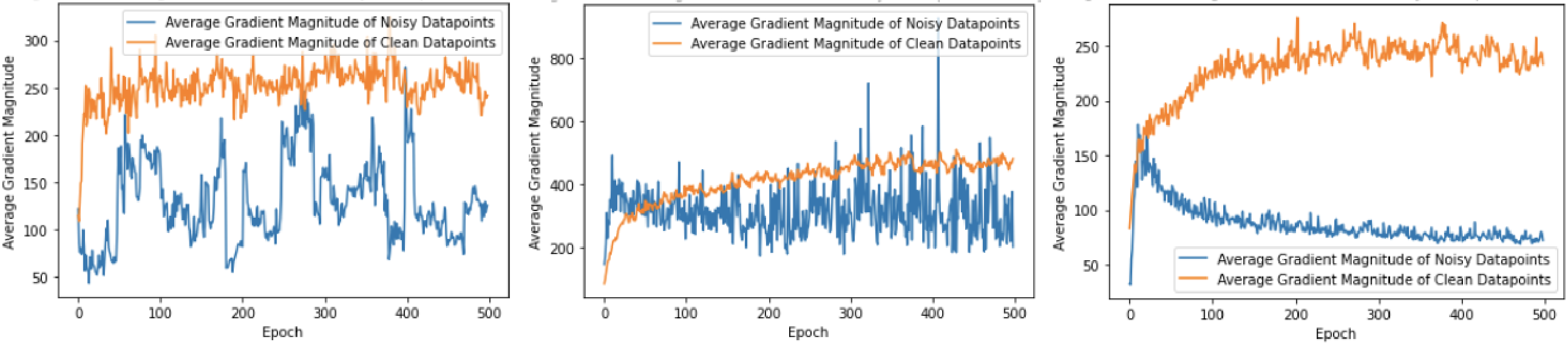}
    \caption{Average gradient magnitudes for noisy examples (orange) and clean examples (blue) in the first round of training on CIFAR-10 using ResNet for the three setups (noisy-blank, noisy-diverse, and noisy-class from left to right). As can be seen, the gradient magnitude varies significantly for the noisy examples compared to the clean examples.}
    \label{fig:conflicting-gradients}
\end{figure}

\textbf{Effect of fine-tuning on \textsc{LHD}: }
As can be seen from Figure \ref{fig:lh-diff}, on supplementing LHD with fine-tuning, the average $||\mathbf{lh}||_2$ for the \emph{difficult-clean} examples becomes significantly higher than the $||\mathbf{lh}||_2$ for \emph{simple-clean} examples. This makes LHD adept at differentiating \emph{difficult-clean} examples from \emph{simple-clean} examples, which improves the performance of the active learning algorithm.

\begin{figure}[!ht]
    \centering
    \begin{subfigure}{.38\textwidth}
    \includegraphics[width=0.99\linewidth]{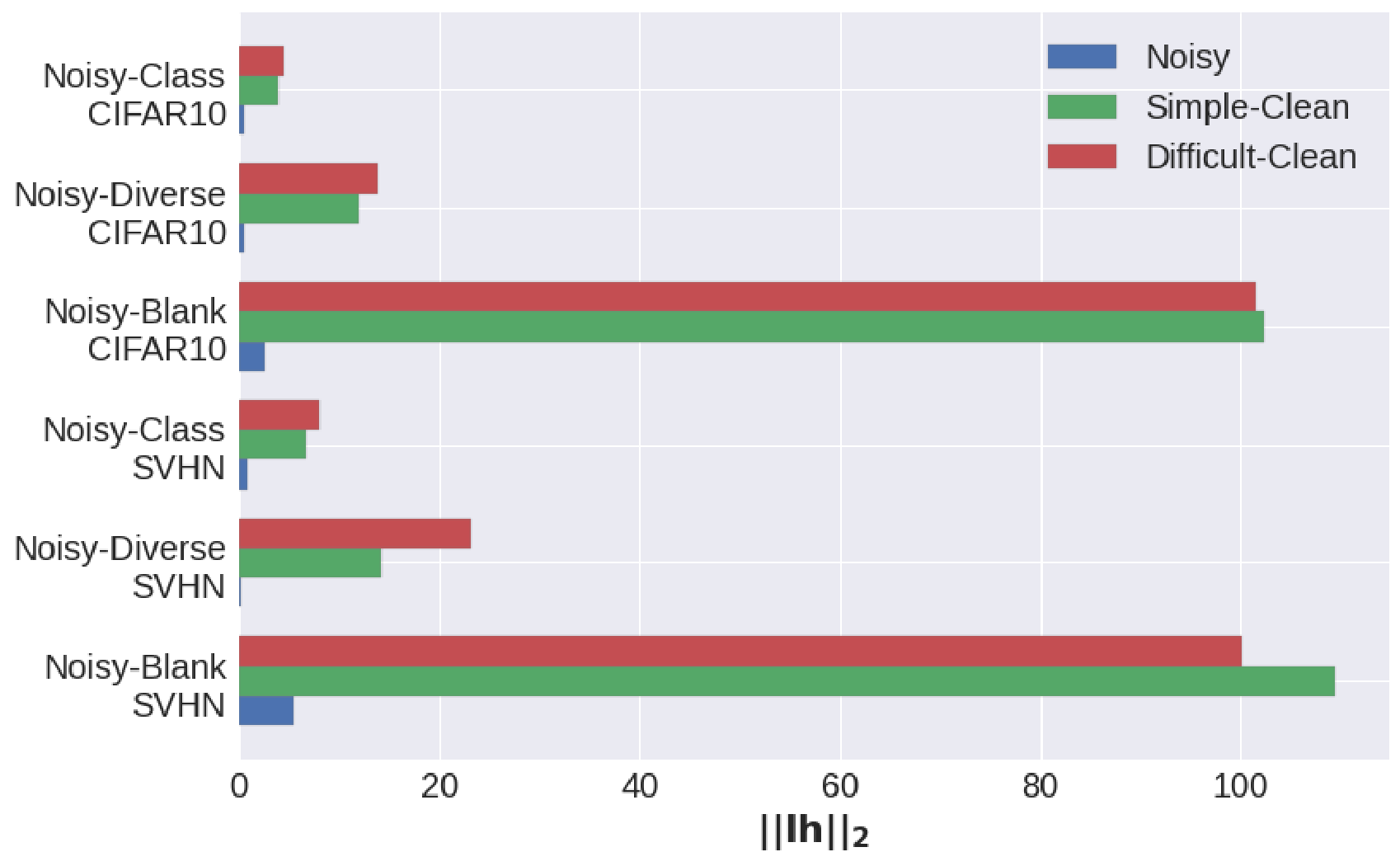}
    \caption{Without fine-tuning}
    \label{fig:wo-ft}
    \end{subfigure}
    \centering
    \begin{subfigure}{.38\textwidth}
    \includegraphics[width=0.99\linewidth]{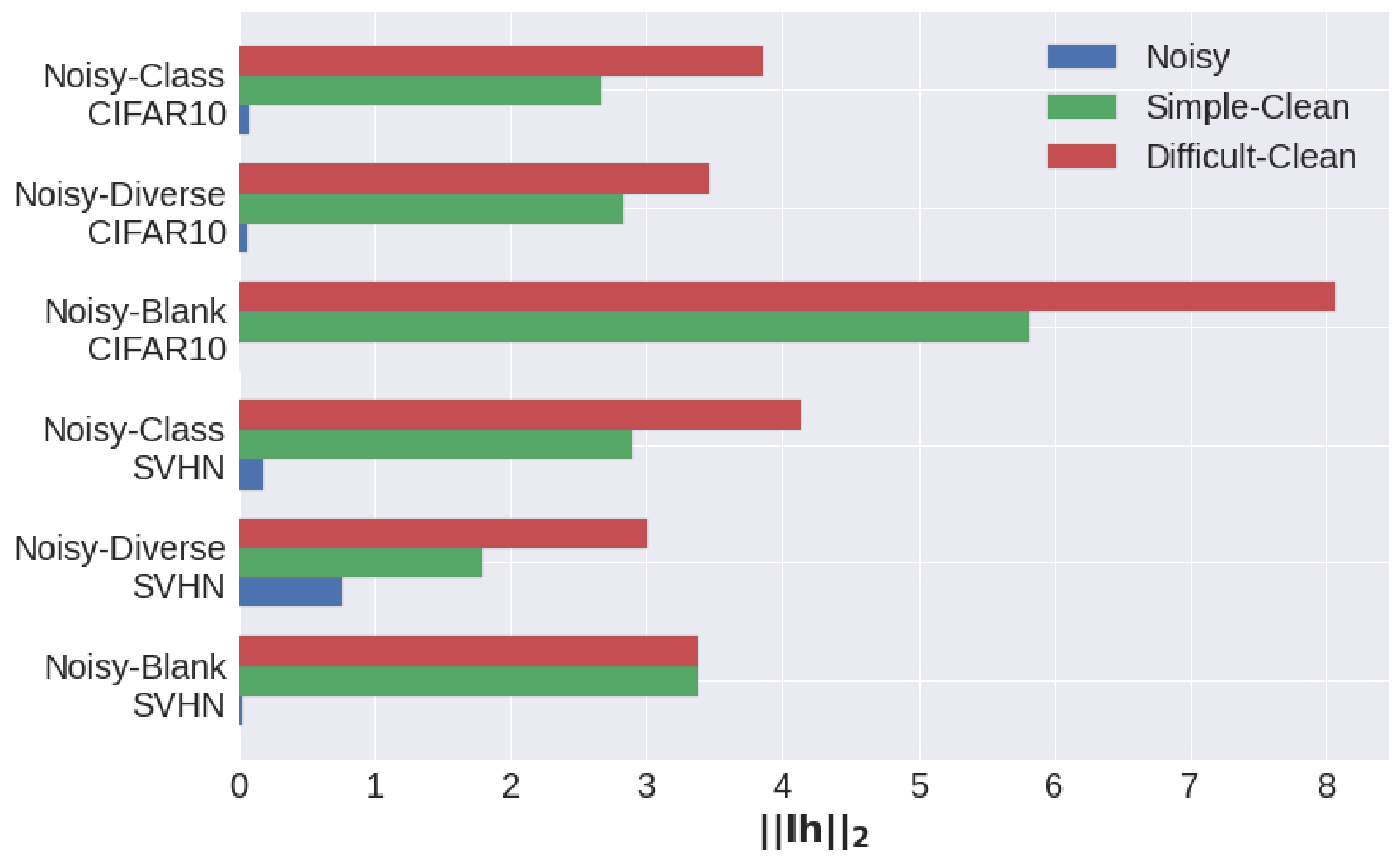}
    \caption{With fine-tuning}
    \label{fig:w-ft}
    \end{subfigure}
    \caption{Average $||\mathbf{lh}||_2$ during first acquisition round across different noising strategies and datasets (higher values indicate examples $\textsc{LHD}$ is more likely to select). $\textsc{LHD}$ has a low preference for selecting noisy examples, despite their high loss values. The experimental setup is same as the one described in Section~\ref{sec:exp} with the ResNet architecture.}
    \label{fig:lh-diff}
\end{figure}

\section{Experiment Details}
\paragraph{Datasets:} 
We experimented with two benchmark datasets - CIFAR10 and SVHN. The CIFAR10 dataset consists of 60000 colored images of size 32 $\times$ 32, split between 50000 training images and 10000 test images. This dataset has ten classes, which include pictures of airplanes, cars, birds, cats, deer, dogs, frogs, horses, ships, and trucks. The SVHN dataset consists of 73257 training samples and 26032 test samples each of size 32 $\times$ 32. Each example is a close-up image of a house number (the ten classes are the digits from 0-9).

We introduce noisy data points belonging to one of the three categories (\emph{Noisy-Blank}, \emph{Noisy-Diverse}, or \emph{Noisy-Class}) to these datasets to get their heteroskedastic counterparts.

\paragraph{Training setup:} All experiments conducted in this paper used Adam optimizer with $\beta_1$=0.9, $\beta_2$=0.999, and a learning rate of $1 \times 10^{-4}$. In each round of active learning, the main model is trained on a batch of 64 examples and for 500 epochs over the training set. The exponentially moving average of the main model is computed using a decay parameter $\alpha$=0.999. After each round of training, a batch of 1000 unlabeled examples is acquired for labeling.

While fine-tuning, we set a probability threshold of 0.8 for selecting high confidence examples from the unlabeled pool. We fine-tune the model for 500 epochs, using the Adam optimizer and learning rate of $1 \times 10^{-2}$. During fine-tuning, we randomly select four augmentations from RandAugment followed by Cutout. 

All the experiments are run using Tesla V100. Table~\ref{tab:time} states the approximate time taken to run ten rounds of active learning for different algorithms using the setup described above. As can be seen, active learning using LHD is faster than BADGE and CORESET, and it gives better or comparable performance to them.

\begin{table}
\centering
\caption{Time taken (in hours) to run 10 rounds of active learning for different algorithms using Tesla V100 in a setup where the noisy examples constitute 80\% of the data points}
\label{tab:time}
\resizebox{0.5\linewidth}{!}{
\begin{tabular}{lccc}

\toprule
\textbf{Method} & \textbf{\shortstack[l]{Time taken w/o fine-tuning}} & \textbf{\shortstack[l]{Time taken w/ fine-tuning}} \\
\midrule
$\text{RAND}$ & 3.5 & 15.5 \\
$\text{CONF}$ & 3.5 & 15.5 \\ 
$\text{MARG}$ & 3.5 & 15.5 \\ 
\midrule
$\text{CORESET}$ & 5.5 & 20.0 \\ 
$\text{BADGE}$ & 6.0 & 21.5 \\ 
$\text{LHD}$ & 4.8 & 16.0 \\ 

\midrule
\bottomrule
\end{tabular}}
\end{table}



\section{Proof of Theorem \ref{thm:1}}
\label{sec:proof}
We first notice that the following proposition from \cite{kawaguchi2020ordered} still holds with the corrupted data with the same proof $g(\Dcal)$:
\begin{proposition} \label{prop:opt_2}
For any $j \in \{1,\dots,n\}$, $\gamma_{j} \le\frac{s}{n}$. 
\end{proposition}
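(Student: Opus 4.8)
The plan is to unwind the definition of the weights $\gamma_j$ inherited from the ordered-SGD framework of \cite{kawaguchi2020ordered} and to observe that the claimed bound is a purely combinatorial fact, insensitive to whether the data is clean or corrupted. Recall that ordered SGD draws a minibatch of size $s$ uniformly from the $n$ training indices, evaluates the individual losses on that minibatch, and retains the $q$ indices attaining the largest losses; the quantity $\gamma_j$ is exactly the probability that the sample whose loss has global rank $j$ (the one achieving $L_{(j)}(\theta;\Dcal)$) is among those retained top-$q$ indices. Equivalently, with the fixed tie-breaking rule that makes the order $\{(1),\dots,(n)\}$ well defined, one has the closed form
\[
\gamma_j \;=\; \frac{s}{n}\sum_{k=0}^{q-1}\frac{\binom{j-1}{k}\binom{n-j}{\,s-1-k\,}}{\binom{n-1}{\,s-1\,}},
\]
since the rank-$j$ sample lands in the minibatch with probability $s/n$ and, conditioned on that, the number of strictly-higher-loss samples accompanying it is hypergeometric supported on $\{0,\dots,s-1\}$.

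The bound then follows in one line. The event ``the rank-$j$ sample is retained in the top-$q$ of the minibatch'' is contained in the event ``the rank-$j$ sample belongs to the minibatch'', which has probability exactly $s/n$ for sampling without replacement (and at most $s/n$ for sampling with replacement); hence $\gamma_j \le s/n$ for every $j$. In terms of the displayed formula, the hypergeometric weights sum to $1$ over the full range $k \in \{0,\dots,s-1\}$, so truncating the sum at $k = q-1$ can only shrink it, giving the same conclusion.

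The only point needing a word of care is the assertion that this ``still holds with the corrupted data $g(\Dcal)$ with the same proof'': $\gamma_j$ depends on the loss values only through the ranking they induce, and passing from $\Dcal$ to $g(\Dcal)$ merely changes which physical sample sits at rank $j$ --- it alters neither $n$, $s$, $q$ nor the combinatorial inclusion probability --- so the argument is verbatim that of \cite{kawaguchi2020ordered}. I expect no genuine obstacle here; the proof is a one-step probabilistic domination, and the only thing to be explicit about is invoking the fixed tie-breaking rule already assumed in the Notation section so that each $\gamma_j$ is well defined.
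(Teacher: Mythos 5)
Your proof is correct and takes essentially the same route as the paper, which gives no argument of its own but simply defers to \cite{kawaguchi2020ordered} with the remark that the corruption $g$ does not affect the proof; your write-up reproduces exactly that argument, namely the inclusion of the event that the rank-$j$ sample is retained in the top-$q$ of the minibatch within the event that it lies in the minibatch at all, the latter having probability $s/n$. Your explicit hypergeometric closed form for $\gamma_j$ and the observation that $g$ merely permutes which physical sample occupies each rank (leaving $n$, $s$, $q$ and the inclusion probability untouched) are both faithful to the cited source.
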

We use this proposition in the following proof of Theorem \ref{thm:1} to bound the effect of replacing one sample in a dataset. 

\begin{proof}[Proof of Theorem \ref{thm:1}]
We find an upper  bound on $\sup_{\theta \in \Theta} \EE_{(x,y)}[\ell(f(x;\theta),y_{})]-L_{q}(\theta;g(\mathcal{D}))$ based on  McDiarmid's inequality.  Define 
$$
\Phi(\Dcal)= \sup_{\theta \in \Theta} \EE_{(x,y)}[\ell(f(x;\theta),y_{})]-L_{q}(\theta;g(\mathcal{D})).
$$ 
Our proof plan is to provide the upper bound on $\Phi(\Dcal)$ by using McDiarmid's inequality. To apply McDiarmid's inequality to $\Phi(\Dcal)$, we first show that $\Phi(\Dcal)$ satisfies the remaining condition of McDiarmid's inequality on the effect of changing one sample. Let $\Dcal$ and $\Dcal'$ be two datasets differing by exactly one point of an arbitrary index $i_{0}$; i.e.,  $\Dcal_i= \Dcal'_i$ for all $i\neq i_{0}$ and $\Dcal_{i_{0}} \neq \Dcal'_{i_{0}}$. Since $(j)$ depends on $g(\Dcal)$, we sometimes write $(j;\Dcal)=(j)$ to stress the dependence on $\Dcal$ under $g$. Then, we provide an upper bound on $\Phi(\Dcal') - \Phi(\Dcal)$ as follows:  

\begin{multline*}
\Phi(\Dcal') - \Phi(\Dcal) \le \sup_{\theta \in \Theta} L_{q}(\theta; g(\mathcal{D}))-L_{q}(\theta;g(\mathcal{D'})).
\\ = \sup_{\theta \in \Theta} \frac{1}{q} \sum_{j=1}^n \gamma_{j} (L_{(j;\Dcal)}(\theta;g(\Dcal))-L_{(j;\Dcal')}(\theta;g(\Dcal')))
\\ \le \sup_{\theta \in \Theta} \frac{1}{q} \sum_{j=1}^n |\gamma_{j}||L_{(j;\Dcal)}(\theta;g(\Dcal))-L_{(j;\Dcal')}(\theta;g(\Dcal'))|
\\ \le\sup_{\theta \in \Theta} \frac{1}{q} \frac{s}{n} \sum_{j=1}^n |L_{(j;\Dcal)}(\theta;g(\Dcal))-L_{(j;\Dcal')}(\theta;g(\Dcal'))|  
\end{multline*}

where the first line follows the property of the supremum, $\sup (a) - \sup (b)\le\sup(a-b)$, the second line follows the definition of $L_{q}$ where $(j;\Dcal) \neq (j; \Dcal')$, and the last line follows Proposition \ref{prop:opt_2} ($|\gamma_{j}| \le\frac{s}{n} $). 

We now bound the last term $ \sum_{j=1}^n |L_{(j;\Dcal)}(\theta;g(\Dcal))-L_{(j;\Dcal')}(\theta;g(\Dcal'))|$. This requires a careful examination because  $|L_{(j;\Dcal)}(\theta;g(\Dcal))-L_{(j;\Dcal')}(\theta;g(\Dcal'))| \neq 0$ for more than one index $j$ (although  $\Dcal$ and $\Dcal'$  differ only by exactly one point). This is because  it is possible to have  $(j;\Dcal ) \neq (j; \Dcal')$ for many indexes $j$ where $(j;\Dcal) $ in $L_{(j;\Dcal)}(\theta;g(\Dcal))$ and $(j;\Dcal') $ in $L_{(j;\Dcal')}(\theta;g(\Dcal'))$.          
To analyze this effect, we now conduct case analysis. Define $l(i_{};\Dcal)$ such that $(j)=i$ where $j=l(i_{};\Dcal)$; i.e., $L_i(\theta;g(\Dcal))=L_{(l(i_{};\Dcal))}(\theta;g(\Dcal))$. 

Consider the case where $l(i_{0};\Dcal') \ge l(i_{0};\Dcal)$. Let $j_1=l(i_{0};\Dcal)$ and $j_2=l(i_{0};\Dcal') $. Then,  

\begin{multline*}
\sum_{j=1}^n |L_{(j)}(\theta;g(\Dcal))-L_{(j)}(\theta;g(\Dcal'))| 
\\ = \sum_{j=j_1}^{j_2-1} |L_{(j)}(\theta;g(\Dcal))-L_{(j)}(\theta;g(\Dcal'))|+|L_{(j_{2})}(\theta;g(\Dcal))-L_{(j_{2})}(\theta;g(\Dcal'))|   
\\ =  \sum_{j=j_1}^{j_2-1} |L_{(j)}(\theta;g(\Dcal))-L_{(j+1)}(\theta;g(\Dcal))|+|L_{(j_{2})}(\theta;g(\Dcal))-L_{(j_{2})}(\theta;g(\Dcal'))| 
\\ =  \sum_{j=j_1}^{j_2-1} (L_{(j)}(\theta;g(\Dcal))-L_{(j+1)}(\theta;g(\Dcal)))+L_{(j_{2})}(\theta;g(\Dcal))-L_{(j_{2})}(\theta;g(\Dcal'))
\\ =  L_{(j_{1})}(\theta;g(\Dcal))-L_{(j_{2})}(\theta;g(\Dcal'))\le M,
\end{multline*}

where the first line uses the fact that $j_2 =l(i_{0};\Dcal')\ge l(i_{0};\Dcal)= j_1$ where $i_{0}$ is the index of samples differing in  $\Dcal$ and $\Dcal'$. The second line follows the equality $(j;\Dcal')=(j+1;\Dcal)$ from $j_1$ to $j_2-1$ in this case. The third line follows the definition of the ordering of the indexes. The fourth line follows the cancellations of the terms from the third line.        

Consider the case where $l(i_{0};\Dcal') < l(i_{0};\Dcal)$. Let $j_1=l(i_{0};\Dcal')$ and $j_2= l(i_{0};\Dcal)$. Then,    
\begin{multline*}
\sum_{j=1}^n |L_{(j)}(\theta;g(\Dcal))-L_{(j)}(\theta;g(\Dcal'))| 
\\ =|L_{(j_{1})}(\theta;g(\Dcal))-L_{(j_{1})}(\theta;g(\Dcal'))|+ \\
\sum_{j=j_1+1}^{j_2} |L_{(j)}(\theta;g(\Dcal))-L_{(j)}(\theta;g(\Dcal'))|   
\\ =  |L_{(j_{1})}(\theta;g(\Dcal))-L_{(j_{1})}(\theta;g(\Dcal'))|+\\
\sum_{j=j_1+1}^{j_2} |L_{(j)}(\theta;g(\Dcal))-L_{(j-1)}(\theta;g(\Dcal))| 
\\ =  L_{(j_{1})}(\theta;g(\Dcal))-L_{(j_{1})}(\theta;g(\Dcal'))+\\
\sum_{j=j_1+1}^{j_2} (L_{(j)}(\theta;g(\Dcal))-L_{(j-1)}(\theta;g(\Dcal)))
\\ =  L_{(j_{1})}(\theta;g(\Dcal'))-L_{(j_{2})}(\theta;g(\Dcal))
\\ \le M.
\end{multline*}
where the first line uses the fact that $j_1 =l(i_{0};\Dcal')< l(i_{0};\Dcal)= j_2$ where $i_{0}$ is the index of samples differing in  $\Dcal$ and $\Dcal'$. The second line follows the equality $(j;\Dcal')=(j-1;\Dcal)$ from $j_1+1$ to $j_2$ in this case. The third line follows the definition of the ordering of the indexes. The fourth line follows the cancellations of the terms from the third line. 

Therefore, in both cases of $l(i_{0};\Dcal') \ge l(i_{0};\Dcal)$ and $l(i_{0};\Dcal') < l(i_{0};\Dcal)$, we have that 
$$
\Phi(\Dcal') - \Phi(\Dcal) \le  \frac{s}{q} \frac{M}{n}.
$$
Similarly,  $\Phi(\Dcal) - \Phi(\Dcal')\le  \frac{s}{q} \frac{M}{n}$, and hence $|\Phi(\Dcal) - \Phi(\Dcal')| \le  \frac{s}{q} \frac{M}{n}$. Thus, by McDiarmid's inequality, for any $\delta>0$, with probability at least $1-\delta$,
$$
\Phi(\Dcal) \le  \EE_{\bar \Dcal}[\Phi(\bar \Dcal)] + \frac{Ms}{q} \sqrt{\frac{\ln(1/\delta)}{2n}}.
$$ 
Moreover, since
\begin{equation}
\sum_{i=1}^n r_i(\theta;g(\Dcal)) L_{i}(\theta;g(\Dcal)) = \sum_{j=1}^n  \gamma_{j}\sum_{i=1}^{n} \mathbbm{1}\{i=(j;\Dcal)\} L_{i}(\theta;g(\Dcal)) = \sum_{j=1}^n  \gamma_{j}L_{(j)}(\theta;g(\Dcal)),
\end{equation}
we have that
$$
L_{q}(\theta;g(\Dcal))=\frac{1}{q} \sum_{i=1}^n r_i(\theta;g(\Dcal)) L_{i}(\theta;g(\Dcal)).
$$ 
Therefore,  

\begin{multline*}
    \EE_{\bar \Dcal}[\Phi(\bar \Dcal)] 
    \\ = \EE_{\bar \Dcal}\left[\sup_{\theta \in \Theta} \EE_{(\bar x',\bar y')}[\ell(f(\bar x';\theta),\bar y')]-L(\theta;  \mathcal{\bar D})+
    L(\theta;\mathcal{\bar D})-L_{q}(\theta; g(\bar \Dcal)) \right] 
    \\ \le \EE_{\bar \Dcal}\left[\sup_{\theta \in \Theta} \EE_{(\bar x',\bar y')}[\ell(f(\bar x';\theta),\bar y')]-L(\theta;\mathcal{\bar D})\right]-\Qcal_{n,q}(\Theta,g) 
    \\ \le\EE_{\bar \Dcal, \bar \Dcal'}\left[\sup_{\theta \in \Theta} \frac{1}{n}\sum_{i=1}^n (\ell(f(\bar x_{i}';\theta), \bar y_{i}')-\ell(f(\bar x_i;\theta),\bar  y_{i}))\right]-\Qcal_{n,q}(\Theta,g)  
    \\ \le\EE_{\xi, \bar \Dcal, \bar \Dcal'}\left[\sup_{\theta \in \Theta} \frac{1}{n}\sum_{i=1}^n  \xi_i(\ell(f(\bar x_{i}';\theta), \bar y_{i}')-\ell(f(\bar x_i;\theta),\bar y_{i}))\right]-\Qcal_{n,q}(\Theta,g) 
    \\ \le 2\Rfra_{n}(\Theta) - \Qcal_{n,q}(\Theta,g). 
\end{multline*}

where the third line and the last line follow the subadditivity of supremum, the forth line follows the Jensen's inequality and the convexity of  the 
supremum, the fifth line follows that for each $\xi_i \in \{-1,+1\}$, the distribution of each term $\xi_i (\ell(f(\bar x_{i}';\theta),\bar y_{i}')-\ell(f(\bar x_i;\theta),\bar y_{i}))$ is the  distribution of  $(\ell(f(\bar x_{i}';\theta),\bar y_{i}')-\ell(f(\bar x_i;\theta),\bar y_{i}))$  since $\bar \Dcal$ and $\bar \Dcal'$ are drawn iid with the same distribution. Therefore, for any $\delta>0$, with probability at least $1-\delta$,

$$
\Phi(\Dcal) \le2\Rfra_{n}(\Theta)   -  \Qcal_{n,q}(\Theta,g)+ \frac{Ms}{q} \sqrt{\frac{\ln(1/\delta)}{2n}}.
$$ 
Finally, since changing one data point in $\Dcal$ changes  $\hat \Rfra_{n}(\Theta)$ by at most $M/m$, McDiarmid's inequality implies that
for any $\delta>0$, with probability at least $1-\delta$,
$$
\Rfra_{n}(\Theta)    \le \hat \Rfra_{n}(\Theta)    +M \sqrt{\frac{\ln (1/\delta)}{2n}}.
$$ 
By taking union bound, we obtain the statement of this theorem.

\end{proof}

\end{document}